\documentclass[acmlarge]{acmart}
\makeatletter
\newcommand{\myconfshort}{\acmConference@shortname}
\newcommand{\myconffull}{\acmConference@name}
\newcommand{\myconfdate}{\acmConference@date}
\newcommand{\myconfloc}{\acmConference@venue}
\AtBeginDocument{
  \fancypagestyle{firstpagestyle}{
    \fancyhead{}%
    \fancyfoot[C]{}%
  }
  \fancyhf{}
  \fancyhead[LO]{\@headfootfont\shorttitle}%
  \fancyhead[RE]{\@headfootfont\@shortauthors}%
  \fancyhead[LE]{\@headfootfont\footnotesize \myconfshort, \myconfdate, \myconfloc}%
  \fancyhead[RO]{\@headfootfont\footnotesize \myconfshort, \myconfdate, \myconfloc}%
  \fancyfoot[C]{}%
}
\makeatother
\acmBooktitle{\conffull\@ (\confshort), \confdate, \confloc}

\AtBeginDocument{%
  }

\copyrightyear{2026}
\acmYear{2026}
\setcopyright{cc}
\setcctype{by}
\acmConference[FAccT '26]{The 2026 ACM Conference on Fairness, Accountability, and Transparency}{June 25--28, 2026}{Montreal, QC, Canada}
\acmBooktitle{The 2026 ACM Conference on Fairness, Accountability, and Transparency (FAccT '26), June 25--28, 2026, Montreal, QC, Canada}
\acmDOI{10.1145/3805689.3812302}
\acmISBN{979-8-4007-2596-8/2026/06}

\begin{document}

\title[Characterizing the Pareto Frontier of Algorithmic Decision Systems]{Fairness vs Performance: Characterizing the Pareto Frontier of Algorithmic Decision Systems}

\author{Mieke Wilms}
\authornote{Both authors contributed equally to this research.}
\affiliation{%
  \institution{Zurich University of Applied Sciences / University of Zurich}
  \city{Zurich}
  \country{Switzerland}
}
\email{mieke.wilms@zhaw.ch}

\author{Christoph Heitz}
\authornotemark[1]
\affiliation{%
  \institution{Zurich University of Applied Sciences}
  \city{Zurich}
  \country{Switzerland}}
\email{christoph.heitz@zhaw.ch}

\renewcommand{\shortauthors}{Wilms and Heitz}

\begin{abstract}
Designing fair algorithmic decision systems requires balancing model performance with fairness toward affected individuals: More fairness might require sacrificing some performance and vice versa, yet the space of possible trade-offs is still poorly understood. We investigate fairness in binary prediction-based decision problems by conceptualizing decision making as a multi-objective optimization problem that simultaneously considers decision-maker utility and group fairness. We investigate the set of Pareto-optimal decision rules for arbitrary utility functions for decision maker, arbitrary population distributions, and a wide range of group fairness metrics implementing different justice-theoretic principles such as egalitarianism, prioritarianism, and Rawlsian approaches. We optimize over all possible decision rules based on a given feature vector $\vec{x}$, including rules that explicitly depend on the protected attribute. 

Under these circumstances, we find that the Pareto frontier consists of deterministic, group-specific threshold rules applied to individuals' success probability $p(\vec{x})$. This complements existing optimality theorems from literature which, for specific fairness constraints, posit lower-bound threshold rules only. However we also show that, depending on the used fairness metric, the Pareto frontier may include \emph{upper-bound} threshold rules, thus preferring individuals with lower success probabilities.  We show that the location of the Pareto frontier depends only on population characteristics, utility functions and fairness score, but not on the technical design of the algorithm –- our findings hold for pre-, in-, and post-processing approaches alike. We present an example where an in-processing method adopts group-specific threshold rules without having access to the protected attribute.  

Our results generalize existing optimality theorems for fairness-constrained classification and extend them to generalized fairness metrics and fairness principles, and to partial fairness regimes. This paper connects formal fairness research with legal and ethical requirements to search for less discriminatory alternatives, offering a principled foundation for evaluating and comparing algorithmic decision systems.

\end{abstract}

\begin{CCSXML}
<ccs2012>
   <concept>
       <concept_id>10003752.10003809</concept_id>
       <concept_desc>Theory of computation~Design and analysis of algorithms</concept_desc>
       <concept_significance>500</concept_significance>
       </concept>
   <concept>
       <concept_id>10010405.10010481.10010484</concept_id>
       <concept_desc>Applied computing~Decision analysis</concept_desc>
       <concept_significance>500</concept_significance>
       </concept>
   <concept>
       <concept_id>10010147.10010257</concept_id>
       <concept_desc>Computing methodologies~Machine learning</concept_desc>
       <concept_significance>300</concept_significance>
       </concept>
 </ccs2012>
\end{CCSXML}

\ccsdesc[500]{Theory of computation~Design and analysis of algorithms}
\ccsdesc[500]{Applied computing~Decision analysis}
\ccsdesc[300]{Computing methodologies~Machine learning}

\keywords{decision systems, group fairness, Pareto frontier, multi-objective optimization, fairness-utility tradeoff}


\maketitle

\section{Introduction} \label{sec: Introduction}

Algorithmic decision systems are widely used in society: ranging from the financial industry \cite{fuster_predictably_2017,kozodoi_fairness_2022,moscato_2021_benchmark} to the healthcare sector \cite{beam_2018_big,habehh_2021_machine,shailaja_2018_machine} to governmental institutions \cite{van_2021_digital,lagioia_2023_algorithmic}. Such systems are designed to achieve the goals of a decision maker (DM) while making decisions about individuals in a population, in the following referred to as decision subjects (DS), thereby affecting their lives.  
Concerns about discrimination, unfairness and social justice have been studied in numerous publications during the last years. This paper adopts a \textit{group fairness} approach\footnote{For other notions of fairness, the reader may consult \cite{verma_2018_fairness}.}, which focuses on systematic disadvantages arising from individuals' membership of socially salient groups defined by sensitive attributes such as gender, ethnicity, or age. 

The research literature of the last years has developed many different approaches to design and implement fairness-aware decision making system, particularly in prediction-based decision systems. This are decision systems which involve the prediction of a variable (typically denoted by the symbol $Y$) which informs the decision but is unknown at the time of decision making. Common approaches are often categorized into pre-, in- and post-processing methods \cite{caton_fairness_2020}.

In contrast to focusing on prediction models as such, we study prediction-based decision systems holistically by conceptualizing them as \textit{decision rules}, that is, mappings $f:\vec{x}\mapsto D$ from observable feature vectors $\vec{x}$ to decisions $D$. Each such system produces at least two outcomes of interest: the extent to which it achieves the DM's goals, and the degree of (un)fairness it generates with respect to the relevant social groups. Developers of fairness-aware decision systems need to take both outcomes into account and find solutions that, ideally, maximize both fairness and goal achievement at the same time. However, it is well known that these objectives are often in tension: maximizing goal achievement often occurs at the price of fairness, while enforcing fairness may reduce performance, a phenomenon referred to as the accuracy-fairness tradeoff \cite{pessach_2022_review}. 

This tension naturally leads to questions such as: What is the maximum achievable level of goal achievement for a given minimum level of fairness, and vice versa. This question also arises in the legal context of discrimination \cite{wachter_2021_fairness,weerts_2023_algorithmic}: one of the questions to be answered in a discrimination lawsuit is whether the decision maker could have achieved their goal with less inequality, since there is a legal obligation to search for \textit{Less Discriminatory Alternatives} \cite{black_LDA_2024,laufer_LDA_2025}.

This paper addresses these questions by following a line of work that addresses fairness as a multi-objective optimization problem \cite{liu_2022_accuracy,valdivia_2021_fair,zhang_2022_mitigating,wei_2022_fairness}. Within a two-dimensional outcome space defined by goal achievement and fairness, we study the Pareto frontier, that is, the set of decision rules that are Pareto-optimal with respect to these objectives. The central research question of this paper is: Is it possible to characterize the Pareto frontier of a prediction-based decision system in the solution space of \emph{all possible technical implementations} of decision systems? What can be said about Pareto-optimal decision rules in general, independent on how they are implemented?

Our analysis focuses on binary decision problems with decisions $D\in\{0,1\}$ with a binary decision-critical variable $Y\in\{0,1\}$. We model the outcome of "goal achievement" with a DM utility function $U$ which is assumed to be maximized. Following \cite{baumann_2023_distributivejusticefoundationalpremise}, fairness is captured by a generalized fairness score $FS$, derived from comparisons of expected DS utilities between groups, which is minimized or maximized depending on the underlying principle of justice. This general approach includes classical confusion-matrix-based fairness metrics (Positive rate, True/False Positive Rate, Positive or Negative Predictive Value, etc.  \cite{barocas_fairness_2023,verma_2018_fairness}) as special cases, but also allows explicitly modeling the effect of decisions (harm or benefit) and using alternative principles of distributive justice that go beyond equality.

\subsubsection*{Contributions}
The main contribution of this paper is a characterization of the Pareto frontier of binary prediction-based decision systems $\vec{x}\mapsto D$ in the joint utility-fairness space, across all technical designs and for a broad class of fairness metrics\footnote{The Python-code used for the results presented in this paper is available on \url{https://github.com/mcwilms/Pareto-Frontier-of-Binary-Decision-Systems}}. For determining the Pareto frontier, we consider all possible decision rules, including those that depend on the protected attribute $a$\footnote{Group-dependent decision rules are always possible to implement if $a\in\vec x$, but this is not a necessary condition: even without access to the sensitive attribute $a$ it may be possible to implement group-specific decision rules, see the discussion in Section \ref{subsec: the optimal decision rule} and the example in Section \ref{sec: comparative study}.}. To the best of our knowledge, no prior work has provided such a characterization under comparably general conditions. 

Specifically, we make the following contributions:

\begin{itemize}
    \item We show that, for all fairness metrics based on the evaluation of expected utilities of decision subjects across groups, the Pareto frontier over the space of all possible decision systems allowing for group-dependent decision rules  consists of group-dependent deterministic threshold rules applied to the probabilities $P[Y=1|\vec x]$. These rules may be lower-bound threshold, upper-bound thresholds or combinations thereof. While classical papers such as \cite{hardt_equality_2016,corbett-davies_algorithmic_2017} have proven for some fairness metrics that enforcing fairness might lead to lower-bound threshold rules, we show that, for implementing Pareto-optimal decision rules, it might be necessary to adopt upper-bound threshold rules which prefer individuals with a lower $P[Y=1]$ over higher $P[Y=1]$, a phenomenon which has been described and discussed as \textit{cherry-picking} \cite{fleisher_fair_2021,favier_cherry_2025} or \emph{within group unfairness} \cite{baumann_enforcing_2022}. 
    
    \item We show that the occurrence of upper-bound threshold rules as Pareto-optimal solutions is related to using generalized DS utility functions which model the impact of decisions, as opposed to focusing on classical confusion matrix based metrics. Our results embed existing results on fairness-constraint optimization \cite{hardt_equality_2016,corbett-davies_algorithmic_2017,baumann_enforcing_2022} as special cases, but extend them to a broader range of fairness metrics. These include classical equality-based fairness metrics as well as more general notions based on generalized DS utility functions or other patterns of distributive justice, such as maximin, prioritarianism or sufficientarianism.
    
    \item The derived Pareto frontier establishes a technology-agnostic benchmark independent on the technical design and implementation of the decision system, thus including pre-processing, in-processing and post-processing methods. We show that the Pareto frontier depends only on population characteristics, the DM and DS utility functions, and the chosen fairness score $FS$ derived from the expected DS utility across groups.
    
    \item We show that for determining the location of the Pareto frontier it is enough to have (an estimate of) the probability distributions $g(p|a)$, instead of a Bayes optimal estimate for $p(\vec x)$.  
   
\end{itemize}

Note that all these results are independent on how the decision system is designed. This means that even very different approaches will tend to reproduce the same decision rules when trying (and by trying) to optimally combine decision maker utility and fairness.

\section{Related research}
\subsection{Fairness metrics} \label{sec: fairness scores}
Measuring fairness is a prerequisite to designing fair decision systems, and researchers have introduced many different fairness metrics to measure algorithmic fairness \cite{dwork_fairness_2012,barocas_fairness_2023,verma_2018_fairness,pessach_2022_review,mitchell_algorithmic_2021}. Different approaches such as group fairness, individual fairness, counterfactual fairness, or causal fairness have been proposed and developed In this paper, we focus on group fairness, where groups are defined by a sensitive attribute $a$ (e.g. gender or ethnicity). 

Table \ref{tab: group fairness metrics} contains some examples of popular metrics used to assess group fairness.  
By comparing popular fairness metrics across different groups, a \emph{Fairness Score} $FS$ can be defined that measures the degree of (un)fairness as a continuous variable. For example, $FS=|P[D = 1 \mid a = 0]-P[D = 1 \mid a = 1]|$ is a fairness score for selection rate. \emph{Fairness criteria} can be constructed by requiring a specific value of a fairness score, e.g. $FS=0$. It is important to note that most fairness criteria can not be satisfied at the same time \cite{friedler_2018_impossibility}, requiring a choice and making algorithmic fairness not only a mathematical problem, but also a philosophical one \cite{hertweck_justice-based_2023,weerts_2022_goodhartslaw}.

\begin{table}[t]
\centering
\begin{tabular}{l l}
\hline
\textbf{Metric name} & \textbf{Mathematical notation} 
\\
\hline
Selection Rate 
  & $P[D = 1 \mid A = a]$ 
  \\

TPR (True Positive Rate)
  & $P[D = 1 \mid Y = 1, A = a]$ 
  \\
FPR (False Positive Rate
  & $P[D = 1 \mid Y = 0, A = a] $ 
  \\

PPV (Positive Predictive Value) 
  & $P[Y = 1 \mid D = 1, A = a]$ 
  \\
FOR (False Omission Rate) 
  & $P[Y = 1 \mid D = 0, A = a]$ 
  \\
\hline
\end{tabular}
\caption{Examples of metrics used for algorithmic fairness assessment. $D=\{0,1\}$ denotes the binary decision, $Y=\{0,1\}$ denotes the decision critical variable and $A$ denotes the set of groups. 
}
    \label{tab: group fairness metrics}
\end{table}

In this paper, we focus on binary decision systems or binary classifiers. In this context, fairness metrics are often derived from the elements of the confusion matrix, e.g. via True/False Positive Rate (TPR/FPR), True/False Negative Rate (TNR/FNR), Positive/Negative Predictive Value (PPV/NPV), etc. \cite{barocas_fairness_2023}, focusing on inequalities of decisions. However, unfairness defined as disadvantage \cite{barocas_fairness_2023} is rather an inequality of the \emph{impact} of decisions than of the decisions themselves, which led to more recent works focusing on the impact on decision subjects, which might depend not only on $D$, but on a combination of $D$ and $Y$ (see, e.g. \cite{hu_2018_welfare,baumann_2023_distributivejusticefoundationalpremise}). For example, a decision for a medical treatment ($D=1$) based on a prediction of having a specific disease ($Y=1$) may have a positive impact on a patient actually having this disease ($Y=1$), but a negative impact on anyone else, due to severe side effects. The classical confusion-based metrics, in contrast, implicitly assume that a positive decision has the same benefit/harm for individuals with both $Y=0$ and $Y=1$. 

Baumann et al. (2023) \cite{baumann_2023_distributivejusticefoundationalpremise} have suggested to model impact with a \emph{decision subject (DS) utility function} $V$ with matrix components $(v_{ij})_{i,j\in\{0,1\}}$, where $v_{ij}$ is the benefit of decision $D=i$ for an individual with outcome $Y=j$:
\begin{equation*}
    (v_{ij})_{i,j\in\{0,1\}} = \ \ 
\begin{array}{cc}
   & \begin{array}{cc} \scriptstyle Y=0 & \scriptstyle Y=1 \end{array} \\ 
   \begin{array}{r} \scriptstyle D=0 \\ \scriptstyle D=1 \end{array} & 
   \left( \begin{array}{cc}
      v_{00} & v_{01} \\
      v_{10} & v_{11}
   \end{array} \right)
\end{array}
\end{equation*}
They show that the classical confusion-based metrics (such as selection rate, TPR, PPV, etc.) are expectation values of $V$ over the four possible outcomes $Y\in\{0,1\} \times D\in\{0,1\}$, either unconditional expectations $E[V|a]$ or conditioned expectations $E[V|Y=j]$ or $E[V|D=j]$, with specific choices of $V$ and $j$. For example, selection rate $P[D=1|a]$ equals $E[V|a]$ for $(v_{ij})=(0,0;1,1)$. FPR equals $E[V|Y=0,a]$ with the same $(v_{ij})$, and PPV corresponds to $E[V|D=1,a]$ with $(v_{ij})=(0,1;0,1)$ (see Appendix \ref{app: utility fairness} for a derivation). 

This leads to a generalized version of equality-based fairness as (possibly conditioned) equality of such expectation values across groups:
\begin{equation*}
    E[V|J=j,a]=E[V|J=j,a'], \text{ for different groups }a, a'  
\end{equation*}
with $J\in\{\emptyset,Y,D\}$ ($J=\emptyset$ standing for the unconditional case). The most important equality-based fairness criteria (e.g. Demographic Parity, FNR parity, PPV parity) are special cases, but the generalized form establishes a much broader range of equality-based fairness metrics. Partial fairness can be measured by a \emph{Fairness score} $FS$ which, for example, can be chosen as the absolute difference $|E[V|J=j,a]-E[V|J=j,a']|$.  

Equality-based fairness criteria as defined above are the most common forms in the fairness literature. Philosophically, they fall in the category \textit{egalitarianism}, a theory of distributive justice that sees some form of equality in distributions of burden and profit as morally desirable \cite{arneson_egalitarianism_2002,hertweck_distributive_2024}. Beyond egalitarian fairness, Baumann et al. \cite{baumann_2023_distributivejusticefoundationalpremise} extend their approach also to other justice principles like \textit{Rawls max-min} \cite{rawls_justice_1999} (where fairness is assessed by the utility of the worst-off group); \textit{prioritarianism} \cite{holtug_prioritarianism_2007} (where the utility of disadvantaged group weights heavier); or \textit{sufficientarianism} \cite{frankfurt_equality_1987} (where fairness is reached if all groups meat a minimum level of utility). They show that, also in these cases, fairness scores can be constructed as functions of the expected DS utilities for the different groups $E[V|J=j,a], a\in A$ (see also \cite{baumann_2023_distributivejusticefoundationalpremise,hertweck_distributive_2024}).

Our approach builds on this generalized approach of measuring fairness by average impact on individuals of different groups and deriving a scalar fairness score $FS$. Depending on the chosen principle of distributive justice the objective is to minimize $FS$ (e.g. when egalitarianism is pursued via the absolute difference in decision subject utility between groups) or to maximize $FS$ (e.g. in Rawls max-min principle).

\subsection{Enforcing fairness via Constraint Optimization} \label{sec: implementing fairness}
To implement fairness in settings where a DM seeks to maximize their own benefit, many works incorporate a constrained optimization approach, maximizing DM utility under a fairness constraint. Several papers have analyzed and solved this problem for specific cases. For statistical parity, true positive rate (TPR) parity and false positive rate parity (FPR), it has been proven that the optimal decision rule satisfying these constraints is given by a group-dependent lower-bound threshold rule applied to Bayes-optimal predictions \cite{corbett-davies_algorithmic_2017,hardt_equality_2016}. For positive predictive value (PPV) parity, false omission rate (FOR) parity and sufficiency (PPV parity and FOR parity simultaneously) it has been proven that the optimal decision rules is a group-dependent threshold rule which might include upper-bound thresholds \cite{baumann_enforcing_2022}. 

These works \cite{hardt_equality_2016,corbett-davies_algorithmic_2017,baumann_enforcing_2022} are limited in three ways: First, they are based on confusion-matrix based metrics (FPR, PPV, etc.) and do not allow for generalized metrics based on expected DS utility functions. Second, they seek to implement parity with as underlying moral principle  egalitarianism. Other relevant notions of justice as Rawls max-min principle or prioritarianism (see Section \ref{sec: fairness scores}) are not covered. Third, they provide optimality theorems only for fully enforcing parity (fairness as constraint), not for partial fairness.

In this paper we provide an optimality theorem which overcomes these limitations while including the classical optimality theorems, by allowing (a) for generalized fairness metrics based on arbitrary DS utility matrices $Y$, (b) for non-egalitarian fairness notions, and (c) for arbitrary levels of fairness requirements, ranging from requiring full fairness to ignoring fairness at all.

\subsection{Fairness as a multi-objective optimization problem} \label{sec: moo in ADS}

A more recent line of ML fairness research treats fairness as a multi-objective optimization problem, focusing on trade-offs between a DM goal and fairness, often under a perspective of fairness-accuracy tradeoff \cite{pessach_2022_review,tang_2023_theoretical}. This perspective includes solutions that satisfy fairness only partially and thus expands the solution space compared to constraint-optimization methods of \cite{corbett-davies_algorithmic_2017,hardt_equality_2016,baumann_enforcing_2022}. Different papers provide approaches to find the Pareto frontier for binary classification problems (which is equivalent to our perspective of binary decision systems), formulating the task as a ML problem and applying in-processing methodologies, e.g. \cite{liu_2022_accuracy,valdivia_2021_fair,zhang_2022_mitigating,wei_2022_fairness}. While using different algorithmic approaches, each of these papers is based on a specific class of ML models (e.g. neural networks \cite{wei_2022_fairness,zhang_2022_mitigating}, stochastic multi-gradient algorithm \cite{liu_2022_accuracy}, or logistic regression \cite{valdivia_2021_fair}), showing improvement over other approaches, but not providing any optimality properties in the full space of \emph{all conceivable classification algorithms} as our paper does. Furthermore, a characterization of the solutions on the Pareto frontier is not possible which is a standard limitation of in-processing approaches. 

\cite{silvia_2020_general,jang_2022_group} follow a different approach, by analyzing the classification task as a thresholding problem on the (estimated) probability $P[Y=1]$ of the individuals, leading to Pareto-optimal solutions which are better interpretable. However, both papers limit the space of decision rules to the class of lower-bound threshold rules, as opposed to allowing arbitrary decision rules. Our paper shows that  this is a serious limitation and may lead to suboptimal frontiers.

Note that the aforementioned methods construct a Pareto frontier only over a restricted subset of decision rules, rather than over the full space of possible decision rules, and therefore do not resolve the question whether a less discriminatory alternative (LDA) exists \cite{black_LDA_2024}. \cite{cen_2025_auditsresourcedataaccess} propose a scaling-law based method to test for the existence of an LDA using test data, a small subsample of training data and an estimate of the both the contested model's size and the amount of data used to train the contested model. Their approach enables comparison of the contested model against a multiplicity of alternative models to assess the existence of an LDA. In contrast, we propose a methodology to derive an outer bound of the Pareto frontier that requires only (a sufficiently accurate estimate of) the probability distributions of $P[Y=1|\vec x,a]$ for individuals of protected group $a\in A$. 

We are not aware of any papers which (a) consider the complete space of possible decision rules (or classification models), (b) allow for generalized utility functions of both DM and DS, and (c) characterize the Pareto-optimal solutions by their decision rule. 
Conceptually, our paper fundamentally differs from the current state-of-the-art research in algorithmic fairness in that it is based on a decision-theoretical perspective (\textit{How to characterize Pareto-optimal decisions?}) rather than a learning-focused ML perspective (\textit{How to derive the Pareto frontier from learning data?}). This change in perspective allows for novel insights which, in turn, may inspire new ML approaches.

\section{Determining the Pareto frontier}
In this section, we derive the Pareto frontier of a binary prediction-based decision system, capturing the optimal utility-fairness trade-off for a broad class of fairness notions. We first introduce the problem statement and notation (Section \ref{subsec: problem statement}), then define the utility functions (Section \ref{subsec: utility-based}). Next, we characterize the Pareto-optimal decision rules (Section \ref{subsec: the optimal decision rule}) and finally we show how to construct the Pareto frontier (Section \ref{subsec: construction of PF}). 

\subsection{Problem statement and notation} \label{subsec: problem statement}
We consider a prediction-based decision system where a decision maker (DM) takes a binary decision $D$ on individuals (decision subjects [DS]) of a population P, based on a feature vector $\vec x$ which includes the sensitive (or protected) attribute $a\in A$ that indicates the group membership\footnote{For covering the space of all possible decision rules, we allow decision rules which depend on the sensitive attribute $a$. In practice, this might be legally prohibited. We will discuss situations where $a$ is not used for decision making in Sections \ref{sec: comparative study} and \ref{sec:Conclusions}.}. Furthermore, each individual is characterized by a binary random outcome variable $Y$ whose value is not known at the time of decision making, yet is critical to the decision making. For example, a bank has to decide whether to give a loan to an individual ($D=1$) or not ($D=0$), not knowing whether the individual will repay the loan at the end of maturity ($Y=1$) or not ($Y=0$). In the following, we assume that, if $Y$ was known, the optimum decision rule for the DM would be $D=1$ if $Y=1$ and $D=0$ if $Y=0$. 

The population P is characterized by a joint probability distribution over $(Y,\vec x)$. For binary $Y$, this joint probability distribution is fully determined by the function $p(\vec x):=P[Y=1|\vec x]$ (as $P[Y=0|\vec x]=1-P[Y=1|\vec x]$) and the probability density $h(\vec x)$ over the feature space\footnote{For any given $\vec x$, the outcome $Y$ is a random variable. The uncertainty of $Y$ can include both the intrinsic uncertainty of $Y$ for each individual as well as uncertainty due to the aggregation of different individuals with the same feature vector. $P[Y=1|\vec x]$ is the probability of $Y=1$ of all individuals with the same feature vector $\vec x$.}. Prediction-based decision making exploits dependencies between $\vec x$ and $Y$ to optimize the decision. 

We denote the utility of the DM, created by taking the decision $D$, by a random variable $U$ which depends on both $D$ and $Y$. For each individual decision, the decision-process has four possible outcomes, depending on the combination $(D,Y)$. Each of these outcomes corresponds to a different value $u_{ij}$ of the DM utility, corresponding to $D=i,Y=j$. For example, $u_{10}$ is the utility for an individual with $Y=0$ who is given a decision $D=1$ (utility of a false-positive). We capture this in a matrix $(u_{ij})_{i.j\in\{0,1\}}$. For an ideal outcome $D=Y$ (in case of known $Y$), the utility matrix needs to fulfill the following consistency conditions: $u_{11}> u_{01}$ and $u_{00}> u_{10}$. 

Each of the four outcomes also corresponds to a DS utility $V$, quantifying the amount of benefit (or harm) that the individual receives from the decision - which also may depend on both $D$ and $Y$ \cite{hertweck_group_2024}. We capture this in the utility matrix $(v_{ij})_{i.j\in\{0,1\}}$ of the DS. In contrast to $(u_{ij})$, there are no consistency conditions on the values of $(v_{ij})$ and we allow arbitrary $V$. We assume that both utility matrices are constant across individuals of the same group, but $(v_{ij})$ might depend on the group.

\subsection{Decision policies and utilities}\label{subsec: utility-based}
Any data-based decision system is based on a decision rule, which maps $\vec x$ to a decision $d(\vec x)$: $\vec x \mapsto d(\vec x)$. We include randomized decision policies, i.e. $d(\vec x) = P[D=1|\vec x]$ with $0\leq d(\vec x) \leq 1$ \footnote{E.g., $d(\vec{x})=0.4$ means that there is a 40 percent chance that an individual with feature vector $\vec{x}$ is assigned a decision $D=1$.}. A deterministic policy is a special case where $d(\vec x) \in \{0,1\}$ for all $\vec x$. 

For an individual with feature vector $\vec x$, the expected DM utility is given by 
\begin{equation} \label{eq: expected Udm}
    \begin{split}
        E[U|\vec{x}] & = p(\vec{x}) d(\vec{x}) u_{11} + (1-p(\vec{x})) d(\vec{x}) u_{10} + (1-p(\vec{x}))(1-d(\vec{x})) u_{00}+p(\vec{x})(1-d(\vec{x})) u_{01} \\
        & =d(\vec{x})(\alpha p(\vec{x})+\beta)+\gamma p(\vec{x})+u_{00}
    \end{split}
\end{equation}
with $\alpha=u_{11}-u_{10}+u_{00}-u_{01}$, $\beta=u_{10}-u_{00}$ and $\gamma=u_{01}-u_{00}$. From $u_{11}> u_{01}$ and $u_{00}> u_{10}$, it follows that $\alpha>0$, $\beta<0$, and $\alpha+\beta>0$. 

Similarly, the expected DS utility is given by $E[V|\vec{x}] = d(\vec{x})(\tilde{\alpha} p(\vec{x})+\tilde{\beta})+\tilde{\gamma} p(\vec{x})+v_{00}$, with $\tilde{\alpha}=v_{11}-v_{10}+v_{00}-v_{01}$, $\tilde{\beta}=v_{10}-v_{00}$ and $\tilde{\gamma}=v_{01}-v_{00}$, with no constraints on $\tilde\alpha$ and $\tilde\beta$. 

The expected DM utility for a randomly selected individual of the population is given by $E[U]=\int_X E[U|\vec{x}]h(\vec{x})d\vec{x}$, integrated over the feature space $X$. 
However, note that $E[U|\vec x$] only depends on $p$ and $d$. Instead of integrating over $\vec x$, we can equally well integrate over $p$:
\begin{equation} \label{eq: int Udm}
    E[U] = \int_0^1 (d(p)(\alpha p+\beta)+\gamma p+ u_{00}) \cdot g(p) \: dp
\end{equation}
where $g(p)$ is the probability distribution of $p$ for the given population, and $d(p)$ is the fraction of individuals with a given $p$ who receive a positive decision: $d(p)=P[D=1|p]$ 
\footnote{Formally, $g(p)$ is given by $g(p)=\int \delta(p(x)-p) h(\vec x) \: d\vec x$, with $\delta$ the Dirac delta function. The function $d(p)$ is given by $d(p)=\frac{1}{g(p)}\int d(\vec x) \cdot \delta(p(\vec x)-p) \hspace{1mm} h(\vec x) \: d\vec x $, denoting the average decision for all individuals with given $p$.}.
The function $g(p)$ is a characteristic of the population, while $d(p)$ characterizes the decision rule, applied to the given population. The same formalism can be applied to the expected DS utility $E[V]$, using the parameters $(\tilde\alpha, \tilde\beta, \tilde\gamma,v_{00})$.

We can calculate $E[U]$ as a mixture of group-specific utilities: $E[U]=\sum_i E[U|a=i] P[a=i]$, with $P[a=i]$ the fraction of individuals of $P$ belonging to group $i$. The expected utilities $E[U|a]$ and $E[V|a]$ of group $a$ can be calculated analogously to Equation (\ref{eq: int Udm}), replacing $d(p)$ by $d(p|a)$ and $g(p)$ by $g(p|a)$. Here, $d(p|a)$ characterizes the decision rule applied to group $a$, and $g(p|a)$ the distribution of $p$ for group $a$. 

Note that $d(p)$ (or, equivalently, the set of $d(p|a)$ for all groups $a$) fully characterizes the decision rule for the sake of calculating the average utilities both for DM and DS. Two different decision rules $d(\vec x)$ which lead to the same $d(p|a)$ for all groups can be considered equivalent, as they lead to the same values of $E[U|a]$ and $E[V|a]$. Optimizing a decision rule can thus be done by optimizing $d(p)$. 

In the following, for notational convenience for the upcoming proofs, we approximate Equation (\ref{eq: int Udm}) using the Riemann integral. We divide the probability space $[0,1]$ into $N$ bins of size $1/N$, where all individuals with a probability $p$ that falls in bin $i$ (i.e. $p\in[\frac{i-1}{N},\frac{i}{N}]$) are assigned probability $p_i=\frac{i-0.5}{N}$ and decision rule $d_i\in[0,1]$: 
\begin{equation}\label{eq: riemann sum E(U)}
      E[U] = \sum_{i=1}^N (d_i(\alpha p_i+\beta)+\gamma p_i+u_{00}) w_i
\end{equation}
where $w_i=\int_{\text{bin }i}g(p)dp$ denotes the weight of each summand. In this discretization, a decision rule $d(p)$ is represented by a vector $(d_1,d_2, \dots,d_N)$. Analogously, for $E[V]$ we derive
\begin{equation}\label{eq: riemann sum E(V)}
      E[V] = \sum_{i=1}^N (d_i(\tilde{\alpha} p_i+\tilde{\beta})+\tilde{\gamma} p_i+v_{00}) w_i
\end{equation}
Note that we introduce this discretization only for easier notation of the proofs following in section \ref{subsec: the optimal decision rule}. We will always assume the limit $N\rightarrow\infty$, corresponding to the correct expectation values. 

The expected DM and DS utility for a member of group $a$ can be approximated analogously by $E[U|a]=\sum_{i=1}^N(d_i(\alpha p_i+\beta)+\gamma p_i+u_{00})w_i^a$ and $E[V|a]=\sum_{i=1}^N(d_i(\tilde{\alpha} p_i+\tilde{\beta})+\tilde{\gamma} p_i+v_{00})w_i^a$, with $w_i^a$ the weight of the summand given the distribution $g(p|a)$ for group $a$. For the sake of notational convenience, we will write $w_i$ instead of $w_i^a$ if the context makes clear which is meant.

\subsection{Pareto-optimal decision rules}\label{subsec: the optimal decision rule}
Let's assume DM and DS utility matrices $(u_{ij})_{i,j\in\{0,1\}}$ and $(v_{ij})_{i,j\in\{0,1\}}$. 
Let $\mathcal{D}$ be the class of all possible decision rules $d(p)$. For a group $a\in A$, any given decision rule $d\in \mathcal{D}$ yields $\big(E[U|a],E[V|J=j,a]\big )$. From the conditions on the values $u_{ij}$ stated in section \ref{subsec: utility-based} it is easy to see that the decision rule optimizing DM utility without fairness constraint is given by a threshold rule 
\begin{equation}\label{eq: theoretical max threshold rule}
    d_i^*=
    \begin{cases}
        1 & \text{for }p_i>-\beta/\alpha \\
        0 & \text{otherwise}
    \end{cases}
\end{equation}

However, we are interested in decision rules that jointly optimize $E[U]$ and the defined fairness score $FS$. Before characterizing the decision rules that optimize $\big(E[U],FS\big)$, we first focus on one group by characterizing decision rules $d(p|a)$ that maximize $E[U|a]$ for all possible values of $E[V|J=j,a]$.

\begin{theorem}\label{thm: threshold decision rule}
    Given a DM utility matrix $(u_{ij})$ and a DS utility matrix $(v_{ij})^a$. The expected DS utility $E[V^a|J=j,a]$, $a\in A$, may be conditioned on  $J\in\{\emptyset, D,Y\}$. Let $\mathcal{D_\eta}$ be the set of all possible decision rules $d(p|a)$ with given $E_{d}[V|J=j,a]=\eta$.

    Then it holds: For any feasible value of $\eta$, the decision rule $d^*\in\mathcal{D_\eta}$ that maximizes $E_{d}[U|a]$ takes one of the following forms 
    \begin{equation} 
    d_{lb}=\begin{cases}
        1 & \text{for } p\geq t_a\\
        0 & \text{otherwise}
    \end{cases} \quad \quad\quad or \quad\quad
    d_{ub}=\begin{cases}
        1 & \text{for } p< t_a\\
        0 & \text{otherwise} 
    \end{cases}
    \end{equation}
    where $t_a\in[0,1]$ denotes a lower-bound (lb) or upper-bound (ub) threshold.
\end{theorem}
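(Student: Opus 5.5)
The plan is to reduce each of the three cases $J\in\{\emptyset,Y,D\}$ to a linear program over the box $[0,1]^N$ with a \emph{single} linear equality constraint, using the discretization of Equations (\ref{eq: riemann sum E(U)})--(\ref{eq: riemann sum E(V)}). The objective is $E[U|a]=\sum_i d_i(\alpha p_i+\beta)w_i+\text{const}$, which is linear in $d=(d_1,\dots,d_N)$.

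\textbf{Step 1: linearize the constraint.} For $J=\emptyset$ the constraint is already linear:
\[
\sum_i d_i(\tilde\alpha p_i+\tilde\beta)w_i=\eta-\sum_i(\tilde\gamma p_i+v_{00})w_i .
\]
For $J=Y$, $E[V|Y=1,a]=\sum_i w_i p_i\,(d_i v_{11}+(1-d_i)v_{01})/P[Y=1|a]$. The denominator does not depend on $d$, so the constraint is again linear with coefficients $c_i=w_ip_i(v_{11}-v_{01})$; the case $Y=0$ is analogous with $(1-p_i)$. For $J=D$, $E[V|D=1,a]$ is the ratio
\[
\frac{\sum_i d_iw_i\bigl(p_iv_{11}+(1-p_i)v_{10}\bigr)}{\sum_i d_iw_i},
\]
and multiplying out the equation $E[V|D=1,a]=\eta$ gives $\sum_i d_iw_i\bigl(p_iv_{11}+(1-p_i)v_{10}-\eta\bigr)=0$. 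The case $D=0$ is analogous with $1-d_i$. In every case the constraint takes the form $\sum_i d_i w_i(\kappa p_i+\mu)=b$, where $\kappa,\mu,b$ depend on $V$, $\eta$ and the case, but not on $i$. The key structural fact is that the per-bin coefficient is \emph{affine in $p_i$}, exactly as the objective coefficient $w_i(\alpha p_i+\beta)$ is.

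\textbf{Step 2: Lagrangian/KKT characterization.} For feasible $\eta$, the LP $\max\,\sum_i d_iw_i(\alpha p_i+\beta)$ subject to $\sum_i d_iw_i(\kappa p_i+\mu)=b$ and $0\le d_i\le 1$ has an optimal solution, and LP duality supplies a multiplier $\lambda$. Complementary slackness then forces
\[
d_i=1 \text{ if } s(p_i)>0,\qquad d_i=0 \text{ if } s(p_i)<0,\qquad s(p):=(\alpha-\lambda\kappa)p+(\beta-\lambda\mu).
\]
Since $s$ is affine in $p$, it changes sign at most once on $[0,1]$. If its slope is positive we obtain a lower-bound rule $d_{lb}$; if negative, an upper-bound rule $d_{ub}$; in both cases the threshold is $t_a=-(\beta-\lambda\mu)/(\alpha-\lambda\kappa)$, clipped to $[0,1]$. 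If the slope is zero, $s$ has constant sign and the rule is all-$1$ or all-$0$, which is covered by $t_a\in\{0,1\}$.

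\textbf{Step 3: fractional bins and the limit $N\to\infty$.} Only bins with $s(p_i)=0$ may take fractional values. Because $s$ is affine and not identically zero, this is at most one bin, whose weight $w_i\to0$ as $N\to\infty$ (for a density $g(p|a)$). Alternatively one can move the threshold continuously within that bin to meet the constraint exactly. Feasibility of this adjustment follows from continuity of $t\mapsto E[V|J,a]$ along the family of threshold rules.

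\textbf{Main obstacle.} I expect the hardest part to be the degenerate case $s\equiv0$, i.e. $\lambda$ satisfying $\alpha=\lambda\kappa$ and $\beta=\lambda\mu$. In that case the objective is proportional to the constraint on the feasible set, so every feasible $d$ is optimal. I would handle it by observing that the constraint value along the family of lower-bound rules ranges continuously from the value at $d\equiv0$ to the value at $d\equiv1$. Hence some threshold rule hits $\eta$, and it is optimal, so the theorem still holds: it only claims that an optimal rule of threshold form exists, not that it is unique.

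A second technical point in the $J=D$ case is the denominator $\sum_i d_iw_i=0$. I would exclude it by restricting to rules with $P[D=1|a]>0$, which is implicit in the constraint being defined at all. I would also check that the cross-multiplied constraint is equivalent to the original one on this set.
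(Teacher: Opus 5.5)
Your proof is correct in substance and takes a genuinely different route from the paper's.

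\textbf{How the two proofs differ.} The paper never uses duality. It perturbs two bins at a time, $(d_j,d_k)\mapsto(d_j+\Delta_j,d_k+\Delta_k)$, choosing $\Delta_j$ so that $E[V|a]$ is unchanged. The resulting utility change is $\Delta_k w_k\tilde f(p_k)\bigl(\mathcal F(p_k)-\mathcal F(p_j)\bigr)$ with $\mathcal F=f/\tilde f$. A case analysis over the sign of $\tilde\alpha$, whether $\tilde p^*\in[0,1]$, and the order of $p^*$ and $\tilde p^*$ then shows that an improving exchange exists unless $d$ is a threshold rule. $J=Y$ is reduced to this case by rescaling, and $J=D$ gets its own exchange ratio.

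Your observation is that in all three cases (after cross-multiplying for $J=D$) the constraint coefficient is affine in $p_i$, so the reduced cost $s$ is affine and changes sign at most once. This replaces the entire case analysis with one complementary-slackness step and treats $J\in\{\emptyset,Y,D\}$ uniformly. Because complementary slackness holds between every primal optimum and any dual optimum, you also get that no non-threshold rule is optimal unless $s\equiv 0$.

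Your approach also exposes that degenerate case, which the paper passes over. When $(\kappa,\mu)\propto(\alpha,\beta)$ (for $J=\emptyset$: $\tilde f\propto f$), $\mathcal F$ is constant and no exchange strictly improves. There only your existence reading of the theorem is true.

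In return, the paper's route is elementary and constructive: it exhibits the improving move on which the proof of Theorem~\ref{thm: Pareto frontier} relies. It also reads off finer information, e.g.\ that only lower-bound rules occur when $\tilde p^*\notin[0,1]$. You can recover this too, since for $J=\emptyset$ one has $s=\tilde f\,(\mathcal F-\lambda)$.

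\textbf{Two points need repair.}

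First, your degenerate-case argument is incomplete. The extreme feasible constraint values are attained by the rule $d_i=1$ iff $\kappa p_i+\mu>0$ and by its complement, and one of these is an upper-bound rule. The lower-bound family contains at most one of these extremes, so it misses feasible $\eta$ near the other. Concatenate the lower- and upper-bound families at $d\equiv 1$ to get a continuous path of threshold rules between the two extremes, then apply the intermediate value theorem.

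Second, for $J=D$ your restricted set $\{P[D=1|a]>0\}$ is not compact, and a maximizer need not exist. Write $\pi=\sum_i d_iw_ip_i/\sum_i d_iw_i$; the constraint fixes $\pi$ when $v_{11}\neq v_{10}$. On the feasible set, $E[U|a]=(\alpha\pi+\beta)P[D=1|a]+\mathrm{const}$. So if $\alpha\pi+\beta<0$, the supremum is approached only as $P[D=1|a]\to 0$, for instance by scaling $d$. The theorem is then vacuous. In general, the same scaling shows the two suprema agree, so any maximizer over positive-mass rules is an optimum of your closed LP and complementary slackness still applies. State it this way rather than asserting that the restricted problem attains its optimum. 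The paper's own $J=D$ argument for $\eta<p^*$ overlooks this as well.
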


The full proof of Theorem \ref{thm: threshold decision rule} can be found in Appendix \ref{app: proof theorem threshold rule}. Here, we provide the basic structure of the proof for the unconditional case $J=\emptyset$. We consider the discretized version of decision rules $d=(d_1,d_2,...,d_N)$ (see Section \ref{subsec: utility-based} ). A lower-bound threshold rule is characterized by $d_i=0$ for $i<i_t$, $d_i=1$ for $i>i_t$ and $d_{i_t}\in[0,1]$ for some bin index $i_t$ that contains the threshold. Analogously, an upper-bound threshold rule is characterized by $d_i=0$ for $i>i_t$, $d_i=1$ for $i<i_t$ and $d_{i_t}\in[0,1]$. 

Let us consider an arbitrary decision rule $\hat d(p|a)$ with $E_{\hat d}[V|a]=\eta$, leading to $E_{\hat d}[U|a]=U_0$. Assume that $\hat d(p|a)$ is not a lower- or upper-bound threshold rule (as characterized before). Then we can construct a new decision rule $d^*$ which keeps $E[V|a]$ unchanged, but increases $E[U|a]$, by changing two elements ($d_j,d_k$) of the decision vector simultaneously. We always assume $j<k$ in the following. 

A simultaneous small change $(\Delta_j,\Delta_k$) of ($d_j,d_k$), i.e. $d_j\mapsto d_j+\Delta_j$ and $d_k\mapsto d_k+\Delta_k$, leads to a change $\Delta E[U|a]$ of $E[U|a]$, and a change $\Delta E[V|a]$ of $E[V|a]$.
From Equation (\ref{eq: riemann sum E(V)}) it follows that $\Delta E[V|a] = \Delta_j (\tilde\alpha p_j +\tilde\beta) w_j +\Delta_k (\tilde\alpha p_k +\tilde\beta) w_k$. Thus, we have $\Delta E[V|a]=0$ (and thereby $E[V|a]$ unchanged) if
\begin{equation} \label{eq:delta dj main}
        \Delta_j=-\frac{\tilde{f}(p_k)w_k}{\tilde{f}(p_j)w_j}\Delta_k
\end{equation}
where $\tilde{f}(p)=\tilde \alpha p+\tilde \beta $ and $w_j$ and $w_k$ denote the weight of the bins. Imposing this condition on the simultaneous change ($\Delta_j,\Delta_k$) results in a change $\Delta E[U|a]$ given by 
\begin{equation}\label{eq: delta E(U) main}
    \Delta E[U|a]=\Delta_k w_k\tilde f (p_k)\big(\mathcal F (p_k)-\mathcal F (p_j)\big)
\end{equation}
with $\mathcal F(p)=f(p)/\tilde{f}(p)$, where $f(p)=\alpha p+\beta$. The two possible forms of $\mathcal F(p)$ are shown in Figures \ref{fig: plots f functions}(e)-(f) (Appendix \ref{app: plots f-functions}). An increase of $E[U|a)]$ ( i.e. $\Delta E[U|a]>0$) requires either a positive or a negative $\Delta k$, depending on the signs of $\tilde f (p_k)$ and $(\mathcal F (p_k)-\mathcal F (p_j))$. As $\tilde f(p)$ can take various shapes with respect to $f(p)$ (as shown in Figures \ref{fig: plots f functions}(a)-(d)) there are several cases to consider.

A simple case is that the function $\tilde{f}(p)$ is strictly positive in $[0,1]$. Then, $\frac{\tilde f(p_k)w_k}{\tilde f(p_j)w_j}$ is positive, and $\tilde f(p_k)\big(\mathcal F (p_k)-\mathcal F (p_j)\big)$ is positive (see $\mathcal F(p)$ in Figure \ref{fig: plots f functions}(e) on the right side of the vertical asymptote). Thus, a positive $\Delta E[U|a]$ requires $\Delta_k>0$ (Eq. (\ref{eq: delta E(U) main})) and $\Delta_j<0$ (Eq. (\ref{eq:delta dj main})). 
This, in turn, requires that $d_j>0$ and $d_k<1$. It is easy to see that there exists at least one pair ($j,k$) with $d_j>0$ and $d_k<1$, unless the following condition holds: if $d_j>0$, then, for all $k>j$, $d_k=1$. This, however, characterizes a lower-bound threshold rule. Thus, unless $d$ represents a lower-bound threshold rule, one can construct a new rule with the same $E[V|a]$ and a higher $E[U|a]$. 

The full proof in Appendix \ref{app: proof theorem threshold rule} leads through all possible cases. It shows that it is always possible to find a change $(\Delta_j,\Delta_k)$ fulfilling Equation (\ref{eq:delta dj main}) which leads to a positive $\Delta E[U|a]$, unless $d$ is a lower- or upper-bound threshold rule, which proves theorem \ref{thm: threshold decision rule} for any discretized approximation of calculating $E[U]$. Taking the limit $N\to\infty$ proves theorem \ref{thm: threshold decision rule} for the exact calculation of $E[U]$. The proofs for the conditional cases $E[V|J=j,a]$ with $J=Y$ and $J=D$ are given in Appendix \ref{app: addendum proof 3.1}.

Theorem \ref{thm: threshold decision rule} has a restricted view on a single group $a$. In the following, we consider multiple groups and derive conditions for Pareto-optimal decision rules with respect to expected DM utility and fairness score. 

First, we introduce the concept of \textit{Pareto optimality}.

\begin{definition}\label{def: pareto optimality}
    Let $\mathcal{D}$ be the set of decision rules $d(p)$, where every $d\in\mathcal{D}$ yields $\big(E_{d}[U],FS_{d}\big)$. A decision rule $d\in\mathcal{D}$ is \textit{Pareto optimal} with respect to $E_{d}[U]$ and $FS$ if and only if it is not Pareto dominated, that is, there does not exist any $\hat d\in \mathcal{D}$ such that $E_{\hat d}[U]\geq E_{d}[U]$ and $FS_{\hat d}\geq FS_d$\footnote{In case the objective is to maximize $FS$, the condition is $FS_{\hat d}\leq FS_d$.} with at least one strict inequality.
\end{definition}

 The set of all Pareto-optimal decision rules is defined as the \textit{Pareto frontier}. The following optimality theorem specifies the Pareto-optimal solutions in the two-dimensional space $(E[U],FS)$.

\begin{theorem} \label{thm: Pareto frontier}
    Given a DM utility matrix $(u_{ij})$, (possibly group-specific) DS utility matrices $(v_{ij})^a$, and a fairness score $FS$ which is a function of the (possibly conditioned) expected DS utilities $E[V^a|J=j,a], a\in A$, with $J\in\{\emptyset,D,Y\}$, and $FS$ is either to be maximized or to be minimized. We consider the set of all possible decision rules $\mathcal D$ including those that allow independent decision rules for different groups $a\in A$. 
    
    Then it holds that any decision rule $d \in \mathcal D$ which is Pareto-optimal with respect to $(E[U],FS)$ implements group-specific deterministic threshold rules on $p(\vec{x})$ for all groups $a\in A$. 
\end{theorem}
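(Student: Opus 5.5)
We argue by contradiction, reducing to Theorem~\ref{thm: threshold decision rule} group by group. Let $d\in\mathcal D$ be Pareto-optimal with respect to $(E[U],FS)$. Since independent rules are allowed for different groups, $d$ is the tuple $(d(p|a))_{a\in A}$. The two objectives then decompose as follows. The DM utility is $E[U]=\sum_a E[U|a]P[a]$, and each $E[U|a]$ depends only on $d(p|a)$. The fairness score is $FS=\Phi\big((E[V^a|J=j,a])_{a\in A}\big)$ for some function $\Phi$, and each argument again depends only on $d(p|a)$. So $FS$ is fixed as long as every group-level quantity $\eta_a:=E_d[V^a|J=j,a]$ is held fixed. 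We need no monotonicity or continuity of $\Phi$; the only property used is that $FS$ depends on $d$ solely through the vector $(\eta_a)_a$.

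Now fix a group $a$ and suppose $d(p|a)$ is not a threshold rule of the form given in Theorem~\ref{thm: threshold decision rule}. The value $\eta_a$ is feasible because $d(p|a)$ attains it. By Theorem~\ref{thm: threshold decision rule}, the maximizer $d^*_a$ of $E[U|a]$ over $\mathcal D_{\eta_a}$ is a lower- or upper-bound threshold rule. The proof of that theorem actually gives more: whenever $d(p|a)$ is not of that form, the pairwise exchange $(\Delta_j,\Delta_k)$ satisfying \eqref{eq:delta dj main} keeps $\eta_a$ unchanged and gives $\Delta E[U|a]>0$. Hence $E_{d^*_a}[U|a]>E_d[U|a]$ strictly.

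Define $\hat d$ by replacing $d(p|a)$ with $d^*_a$ and leaving every other group unchanged. Then all $\eta_{a'}$ are unchanged, so $FS_{\hat d}=FS_d$. At the same time $E_{\hat d}[U]-E_d[U]=P[a]\big(E_{d^*_a}[U|a]-E_d[U|a]\big)>0$, provided $P[a]>0$; groups of measure zero can be ignored. Thus $\hat d$ Pareto-dominates $d$ by Definition~\ref{def: pareto optimality}, whether $FS$ is maximized or minimized. This contradicts Pareto-optimality, so every $d(p|a)$ must be a group-specific threshold rule in $p$.

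Determinism needs one more step. In the discretized form, a threshold rule still allows a fractional $d_{i_t}$ in the single bin containing the threshold. That bin has weight $w_{i_t}\to 0$ as $N\to\infty$. So in the limit the rule is $\mathbf 1[p\ge t_a]$ or $\mathbf 1[p<t_a]$, up to a $g(\cdot|a)$-null set. If $g(p|a)$ has an atom at $t_a$, randomization on that atom may still be needed to hit $\eta_a$ exactly. In that case we either interpret ``deterministic'' up to that atom, or assume $g(\cdot|a)$ is atomless, as the continuum formulation implicitly does.

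\textbf{Main obstacle.} The hardest step is making sure the strict improvement survives the limit $N\to\infty$. Theorem~\ref{thm: threshold decision rule} is stated as ``the maximizer has threshold form''. For a strict domination argument we instead need ``any non-threshold rule is strictly improvable at fixed $\eta_a$''. We would get this directly from the exchange argument in its proof, applied to the continuous $d(p|a)$. Take two sets of positive $g$-mass on which $d$ is interior, placed in the wrong order relative to $\mathcal F$. Moving decision mass between them, with the ratio in \eqref{eq:delta dj main}, keeps $\eta_a$ fixed and gives a strictly positive gain of the form \eqref{eq: delta E(U) main}. The same construction handles the conditioned cases $J=Y,D$, following Appendix~\ref{app: addendum proof 3.1}.
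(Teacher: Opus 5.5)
Your proposal is correct and is essentially the paper's proof. Replace the offending group's rule by the threshold rule of Theorem~\ref{thm: threshold decision rule} at the same $E[V^{a}|J=j,a]$; then $FS$ is unchanged while $E[U]$ strictly rises, giving a contradiction. You also go beyond the paper on three points it leaves implicit: strictness comes from the exchange argument rather than from the theorem's statement, $P[a]>0$ is needed, and atoms of $g(p|a)$ can force randomization at the threshold.

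One caveat you share with the paper concerns the degenerate case where $\tilde f^a$ is proportional to $f$ (i.e.\ $\tilde p^*=p^*$; e.g.\ $(v_{ij})=(u_{ij})=(1,0;0,1)$, accuracy parity with accuracy as DM utility). There $\mathcal F$ is constant, no exchange changes $E[U|a]$, and non-threshold (e.g.\ randomized) rules can be Pareto-optimal. So the strict-improvement step needs a nondegeneracy assumption, which the appendix's case split into $p^*>\tilde p^*$ or $p^*<\tilde p^*$ tacitly makes. Without it, one can only claim that every Pareto-optimal point is attained by threshold rules.
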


\begin{proof}
Assume that there exists a decision rule $\hat{d}$ on the Pareto frontier that does not correspond to a deterministic threshold rule for at least one group in $A$, which we will call $\bar{a}$. 
$\hat{d}$ applied to group $\bar a$ yields $E_{\hat{d}}[U|\bar{a}]$ and $E_{\hat{d}}[V^a|J=j,\bar a]$ . We now construct a decision rule $d^*$ that is the same as $\hat{d}$ except for group $\bar{a}$. For group $\bar{a}$, according to Theorem \ref{thm: threshold decision rule}, there must exist a deterministic threshold rule with $E_{d^*}[U|\bar{a}]>E_{\hat{d}}[U|\bar{a}]$, while $E_{d^*}[V^{\bar a}|J=j,\bar{a}]=E_{\hat{d}}[V^{\bar a}|J=j,\bar{a}]$. Changing the decision rule for group $\bar{a}$ to  $d^*$ thus leads to an increased expected DM utility as $E[U]=\sum_{a\in A} E[U|a]\cdot P[A=a]$, while $FS$ does not change. Thus, $\hat{d}$ is not Pareto-optimal according to Definition \ref{def: pareto optimality}, which leads to a contradiction.  
\end{proof}

Theorem \ref{thm: Pareto frontier} implies that, for a given population of individuals with feature vectors $\vec{x}$, there exists an upper bound on achievable DM utility for each level of fairness, as well as an upper bound on achievable fairness for each level of DM utility. No decision rule $\vec{x}\mapsto d(\vec{x})$ applied to the same population can exceed this theoretical boundary. Consequently, the resulting Pareto frontier defines the best utility-fairness tradeoff across all technical approaches used to construct the decision system (e.g. pre-, in-, or post-processing), and thus manifests a technology-agnostic benchmark. 

Theorem \ref{thm: Pareto frontier} is widely applicable as it holds for arbitrary populations and group compositions, for any (reasonable) DM utility function and any (possibly group-specific) DS utility function, and for arbitrary scalar fairness scores calculated from $E[V|J=j,a]$ where $J\in\{D,Y,\emptyset\}$. 
It generalizes earlier results \cite{corbett-davies_algorithmic_2017,baumann_enforcing_2022,hardt_equality_2016} to fairness, freely specified DS utility functions that capture how decisions generate benefits or harms for individuals, and a wide range of fairness notions, including non-egalitarian ones. 

The theorem states that, for any given group, the Pareto-optimal decision rule may be a lower-bound threshold rule (as in the seminal papers of 2016-17 \cite{corbett-davies_algorithmic_2017, hardt_equality_2016}) but, under certain conditions, may equally well be an upper-bound threshold rule, i.e. $D=1$ for $p<t_a$, which is somewhat counter-intuitive, but has already been observed and discussed in recent works such as \cite{baumann_enforcing_2022,favier_cherry_2025}. 

Note that Theorem \ref{thm: Pareto frontier} is based on the assumption that the decision rules for different groups can be optimized independently. This seems to imply that the decision system needs access to the sensitive attribute $a$. However, this is not necessarily the case. While including $a$ in the feature vector $\vec x$ guarantees that group-specific decision rules can be implemented, and thus is a \emph{sufficient} condition of fulfilling the prerequisites of Theorem \ref{thm: Pareto frontier}, this is not a \emph{necessary} condition. Consider a case where $\vec x$ does not contain $a$, but other features which are sufficiently correlated with $a$. Then, $a$ can be reconstructed from $\vec x$ (at least approximately), which allows implementing group-specific decision rules. An example is given in Section \ref{sec: comparative study} where a trained decision algorithm does not use $a$ but still leads to group-specific thresholds as Pareto-optimal decision rules. We may conclude that any algorithm optimized for determining Pareto-optimal solutions can be expected to approximate the optimal solution given by Theorem \ref{thm: Pareto frontier}, as far as its model structure allows.

\subsection{Construction of the Pareto frontier}\label{subsec: construction of PF}

Theorem \ref{thm: Pareto frontier} characterizes the Pareto-optimal decision rules, but does not fully specify the actual Pareto-optimal rules: while all Pareto-optimal solutions are group-dependent threshold rules, not all group-dependent threshold rules are Pareto-optimal. However, it is easy to see how the theorem leads to a simple constructive approach to determine the Pareto frontier. In a first step, all combinations of group-specific ub (upper-bound) and lb (lower-bound) threshold rules (with thresholds $t_a \in [0,1]$) are evaluated with respect to $E[U|a]$ and $E[V|J=j,a]$, for all groups $a\in A$, from which $E[U]$ and $FS$ can be determined. In a second step, the Pareto frontier is determined by identifying all non-Pareto-dominated combinations. 

Furthermore, since $E[U]$ and $E[V|J=j,a]$ are continuous functions of the thresholds, the Pareto frontier can be approximated by limiting the evaluation of \emph{all} threshold rules (which are infinitely many) to finite grid of thresholds in $[0,1]$. For the case of two groups, a grid of $M$ threshold values in $[0,1]$ for each group leads to  $2M\times2M$ decision rules, the non-dominated ones being Pareto-optimal solutions. The larger $M$ is, the more accurately the shape of the Pareto frontier is sampled.

The results in Section \ref{subsec: the optimal decision rule} are still theoretical, as they are based on the assumption that $p(\vec x)$ is known. However, our framework suggest at least two different approaches of estimating the location of the Pareto frontier from empirical data. 

The \textbf{first approach} is straight-forward: It consists in using machine learning methods for predicting $P[Y=1|\vec x]$, resulting in an estimate $\hat p(\vec x)$ for all $\vec x$. For a test set of data samples $(\vec x, Y, a)$, one can test different threshold combinations with both ub and lb threshold rules. This leads to estimates of $E[U]$ and $FS$. The resulting empirical Pareto frontier is an estimate of the true Pareto frontier. Stochastic variations are expected due to both the imperfection of the prediction model and the finiteness of the test set. 

Note, however, that the location of the Pareto frontier can be determined even without using a predictor of $P[Y=1|\vec x]$ over the feature space. From Equations (\ref{eq: riemann sum E(U)}) and (\ref{eq: riemann sum E(V)}) we see that, for determining $E[U|a]$ and $E[V|J=j,a]$, we only need to know the distributions $g(p|a)$. Thus, a \textbf{second approach} is to estimate the distributions $g(p|a)$, and to use Equations (\ref{eq: riemann sum E(U)}) and (\ref{eq: riemann sum E(V)}) for evaluating different threshold rules (see Section \ref{sec: illustrative example} for an application of this approach). While $g(p|a)$ still needs to be determined from training data, it is arguably much easier to estimate this one-dimensional densities than to find a good predictor of $p(\vec x)$ in a multi-dimensional feature space \footnote{It is beyond the scope of this paper to discuss appropriate estimation methods. Note, however, that an important characteristic of $g(p|a)$ is the empirical base rate.}.

\section{Illustrative example on synthetic data} \label{sec: illustrative example}
We consider a synthetic population consisting of two groups, labeled 0 and 1, with different probability distributions $g(p|a)$ that are assumed to be known (see appendix \ref{app: synthetic dataset sec 4} for details). We choose $FS=|E[V|a=0]-E[V|a=1]|$ which is to be minimized. We construct the Pareto frontier as described in Section \ref{subsec: construction of PF} (second approach) with a threshold step size of $0.001$ (which means we tested $10^3$ threshold values per group, resulting in $4\times10^6$ decision rules). For the considered example of two groups, four types of decision rules are possible: \textit{lb-lb} (both groups use lower-bound thresholds), \textit{lb-ub} (group 0 uses lower-bound thresholds and for group 1 upper-bound), \textit{ub-lb} (vice versa) and \textit{ub-ub} (both groups use upper-bound thresholds).

We set the DM utility matrix to $(u_{ij})=(0,0;-0.5,1)$, meaning that a correct decision $D=1$ yields a DM utility of 1, an incorrect decision $D=1$ yields a loss of 0.5, and decision $D=0$ has no effect on the DM utility. Figure \ref{fig: synth data examples} illustrates, for two different DS utility matrices $(v_{ij})$, a set of decision rules that yield an expected DM utility $E[U]$ and fairness score $FS=|E[V|a=0]-E[V|a=1]|$. In all three subfigures, point A denotes the decision rule that maximizes $E[U]$. Recall that the DM-optimal decision rule without a fairness constraint depends only on $(u_{ij})$ and is given by the group-independent lower-bound threshold rule 
\begin{equation} \label{eq: example max dm U rule}
    d=
    \begin{cases}
        1 & \text{for }p\geq-\beta/\alpha=0.33 \\
        0 & \text{otherwise}
    \end{cases}
    \end{equation}

The Pareto frontier consists of the decision rules that maximize $E[U]$ for given values of $FS$, which is minimized in this example. Graphically, this corresponds to the lower-right outer boundary of the point clouds. We indicate the Pareto frontier separately for each of the four rule types (lb-lb, lb-ub, ub-lb and ub-ub) to visualize which types of decision rules constitute the outer bound.

The first DS utility matrix $(v_{ij})=(0,0;1,1)$ used corresponds to the fairness metric selection rate (see Appendix \ref{app: utility fairness}), which, in the case of $FS=0$, is equal to the egalitarian fairness criterion of statistical parity (also known as demographic parity). For this specific DS utility matrix we find $\tilde{\alpha}=v_{11}-v_{10}+v_{00}-v_{01}=0$ and $\tilde{\beta}=v_{10}-v_{00}=1$, thus $\tilde f(p)=1$. Using Theorem \ref{thm: Pareto frontier} (and insights of the proof in Appendix \ref{app: proof theorem threshold rule}) we find that Pareto-optimal decision rules are always lower-bound threshold rules for the selected fairness metric. This not only confirms the result of Hardt et al. \cite{hardt_equality_2016} and Corbett-Davies et al. \cite{corbett-davies_algorithmic_2017} (that holds for the egalitarian case where $FS=0$), but extends them by going beyond the egalitarian principle and constructing the whole Pareto frontier. Figure \ref{fig: synth data examples}(a) confirms this by showing that the whole Pareto frontier consists only of lower-bound threshold rules. Point B denotes the decision rule that satisfies $FS=0$ (i.e. fulfilling statistical parity).

The second DS utility matrix we test is $(v_{ij})=(0,0;-1,1)$\footnote{Note that the chosen DS utility matrix is in fact a realistic option, e.g. for medical decision making of a treatment that can be lifesaving to patients who are actually sick ($Y=1$), but causes severe side effects to patients do not suffer from the disease ($Y=0$) where $D=1$ denotes that the patient will get the treatment.}. For this specific matrix we find $\tilde\alpha=2$ and $\tilde\beta=-1$, resulting in $\tilde f(p)=2p-1$ which intersects the $x$-axis at $\tilde p^*=0.5$. From the proof of Theorem \ref{thm: threshold decision rule} (Appendix \ref{app: proof theorem threshold rule}) we derive that for $\tilde p^*\in[0,1]$, the DM utility maximizing threshold rule may be either a lower-bound or upper-bound threshold rule. Figures \ref{fig: synth data examples}(b)-(c) show that, while the upper part of the Pareto frontier ($FS>0.18$) is given by a lb-lb threshold rule (red line), most of the Pareto frontier ($FS<0.18$) consists of a lb-ub threshold rule (blue line). This means that individuals of group 1 with lower success probabilities are preferred over those with higher probabilities. Although this outcome may appear counterintuitive, given classical optimality theorems, it is consistent with previous findings \cite{baumann_enforcing_2022,favier_cherry_2025}. 

Point B in Figure \ref{fig: synth data examples}(b) denotes the decision rule satisfying $FS=0$. We see that the lb-ub rule achieving $FS=0$ attains significantly higher expected DM utility than the corresponding lb-lb rule.

In Figure \ref{fig: synth data examples}(c) we zoom in to the region where DM utility is maximal, showing a lb-lb solution. This is expected as the maximum DM utility (point A) is always achieved by a lb-lb threshold rule (see Equation (\ref{eq: example max dm U rule})). At point C, the lb-lb and the lb-ub Pareto frontiers intersect and yield identical performance. The corresponding decision rules are equivalent: 
\begin{equation} 
d_{\text{lb-lb}}=
\begin{cases}
    1 & \text{for }p\geq0.42 \text{ and } A=0 \\
    1 & \text{for }p\geq0 \text{ and } A=1 \\
    0 & \text{otherwise}
\end{cases}
\quad\quad\quad
d_{\text{lb-ub}}=
\begin{cases}
    1 & \text{for }p\geq0.42 \text{ and } A=0 \\
    1 & \text{for }p<1 \text{ and } A=1 \\
    0 & \text{otherwise}
\end{cases}
\end{equation}
 
\begin{figure}[t]
    \includegraphics[width=0.33\linewidth]{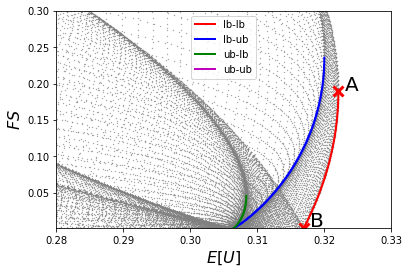}
    \includegraphics[width=0.33\linewidth]{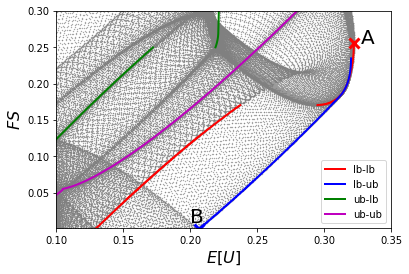}
    \includegraphics[width=0.33\linewidth]{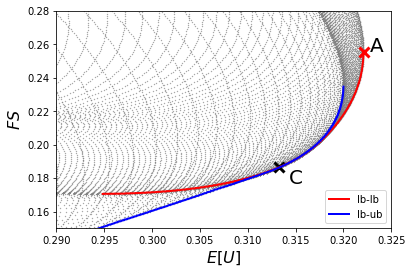}
    \\
    \small (a) $(v_{ij})=\begin{array}{cc}
   & \begin{array}{cc} \scriptstyle Y=0 & \scriptstyle Y=1 \end{array} \\ 
   \begin{array}{r} \scriptstyle D=0 \\ \scriptstyle D=1 \end{array} & 
   \left( \begin{array}{cc}
      0 & 0 \\
      1 & 1
   \end{array} \right)
\end{array}$ \quad\quad\quad
    (b) $(v_{ij})=\begin{array}{cc}
   & \begin{array}{cc} \scriptstyle Y=0 & \scriptstyle Y=1 \end{array} \\ 
   \begin{array}{r} \scriptstyle D=0 \\ \scriptstyle D=1 \end{array} & 
   \left( \begin{array}{cc}
      0 & 0 \\
      -1 & 1
   \end{array} \right)
\end{array}$\quad\quad\quad
    (c) $(v_{ij})=\begin{array}{cc}
   & \begin{array}{cc} \scriptstyle Y=0 & \scriptstyle Y=1 \end{array} \\ 
   \begin{array}{r} \scriptstyle D=0 \\ \scriptstyle D=1 \end{array} & 
   \left( \begin{array}{cc}
      0 & 0 \\
      -1 & 1
   \end{array} \right)
\end{array}$
    \caption{The Pareto frontier for different settings of DS utility matrices. The plots show a sample of all possible threshold combinations (grey dots), the Pareto frontier for only lower-bound threshold rules for both groups (red), the Pareto frontier for lower-bound for group 0 and upper-bound for group 1 (blue), the Pareto frontier for upper-bound for group 0 and upper-bound for group 1 (green) and the Pareto frontier for only upper-bound threshold rules for both groups (purple). }
    \label{fig: synth data examples}
\end{figure}

\section{Comparative study: in-processing fairness via a stochastic multi-objective approach}\label{sec: comparative study}
In this section we compare our method for finding the Pareto frontier of a binary decision system with an in-processing approach proposed by Liu and Vicente \cite{liu_2022_accuracy}, the PF-SMG (\textit{Pareto front stochastic multi-gradient}) algorithm. PF-SMG implements fairness in decision systems by formulating a stochastic multi-objective optimization problem that aims in jointly minimizing logistic regression loss as a measure of accuracy (capturing model performance), and disparate impact as measure of fairness, using an approximation of the so-called CV score $CV=|P[D=1|a=0]-P[D=1|a=1]|$ \cite{calders_2010_three}. The algorithm results in a Pareto frontier with respect to performance and the CV approximation. Plotted in the (Accuracy,CV)-space, these solutions are somewhat scattered, but still indicating the shape of the Pareto frontier. 

The metrics used in \cite{liu_2022_accuracy} directly map to our utility framework by choosing $(v_{ij})=(0,0;1,1)$ (thus $E[V|a]=P[D=1|a]$, see Appendix \ref{app: utility fairness}) and $(u_{ij})=(1,0;0,1)$ (thus $E(U)=P[D=Y]$).

We use the Adult Income dataset\footnote{The Adult Income dataset is from the US Census bureau and the task is to predict whether a given adult makes more that $\$50000$ a year. The dataset is available in the UCI Machine Learning Repository \url{https://archive.ics.uci.edu/dataset/2/adult}.} (also used in \cite{liu_2022_accuracy}). Liu and Vicente \cite{liu_2022_accuracy} assume that the decision algorithm has no access to the sensitive attribute $a$, and accordingly trained their classifiers (using $a$ only for evaluation purposes). Analogously, we trained a logistic regression model on the features $\vec x$ excluding $a$ (using $5000$ samples), resulting in individual success probability estimates $\hat p_{LR}(\vec x)$. Applying $100\times100$ threshold combinations with lower-bound threshold rules\footnote{For the chosen DS utility matrix $(v_{ij})=(0,0;1,1)$, the Pareto-optimal solutions need to be lower-bound threshold rules (compare Fig. \ref{fig: synth data examples}(a) in Section \ref{sec: illustrative example}).} and evaluating both $E(U)$ and $|E[V|a=0]-E[V|a=1]|$ for each combination, on a test set with $40222$ samples, we construct our Pareto frontier as the set of all non-dominated combinations. The PF-SMG algorithm was run using code (including the parameter settings) provided by Liu and Vicente \cite{liu_2022_accuracy}\footnote{\url{https://github.com/sul217/MOO_Fairness}}, with the same train-test split as in our experiment.

Figure \ref{fig: section 5}(a) shows the two Pareto frontiers achieved by Liu and Vicente\footnote{Compare to Fig. 1(d) in \cite{liu_2022_accuracy}.} and our approach. Our Pareto frontier is clearly dominating the one found by Liu and Vicente. At a first glance, this might seem not surprising since our approach uses the sensitive attribute $a$ for the decision making (by implementing group-dependent decision rules), while \cite{liu_2022_accuracy} only allow decision rules not using $a$.

\begin{figure}[t]
    \includegraphics[width=0.33\linewidth]{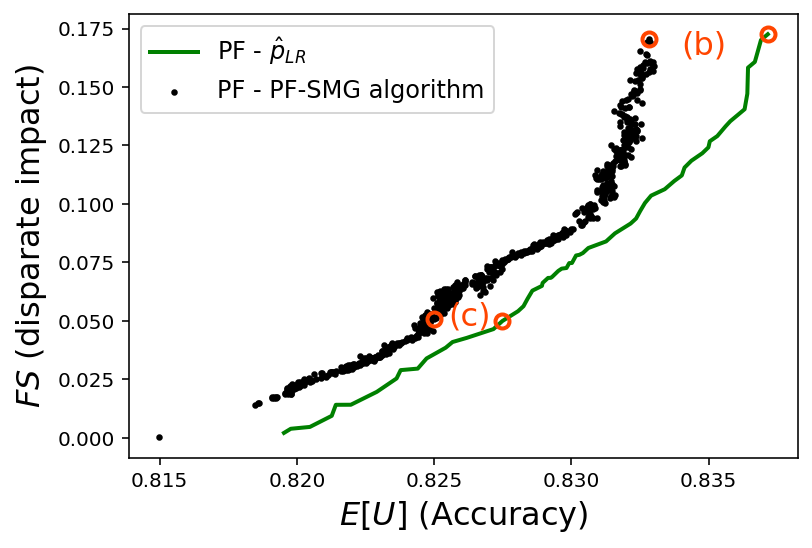}
    \includegraphics[width=0.33\linewidth]{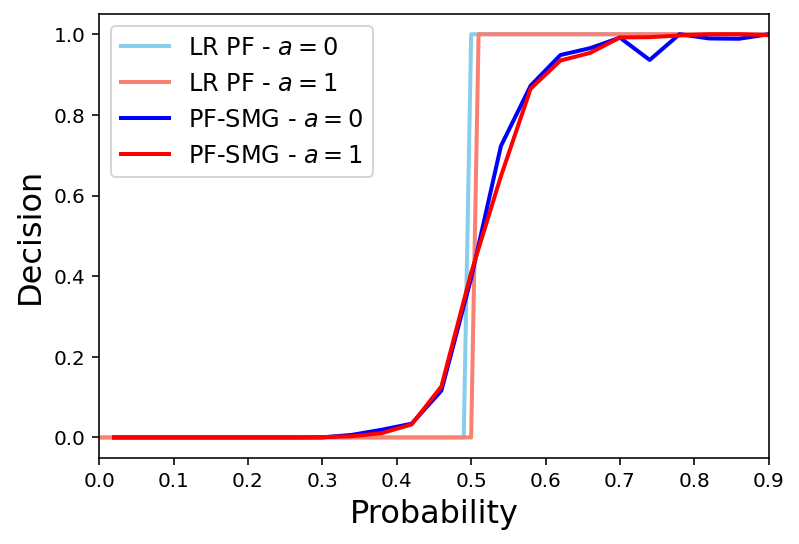}
    \includegraphics[width=0.33\linewidth]{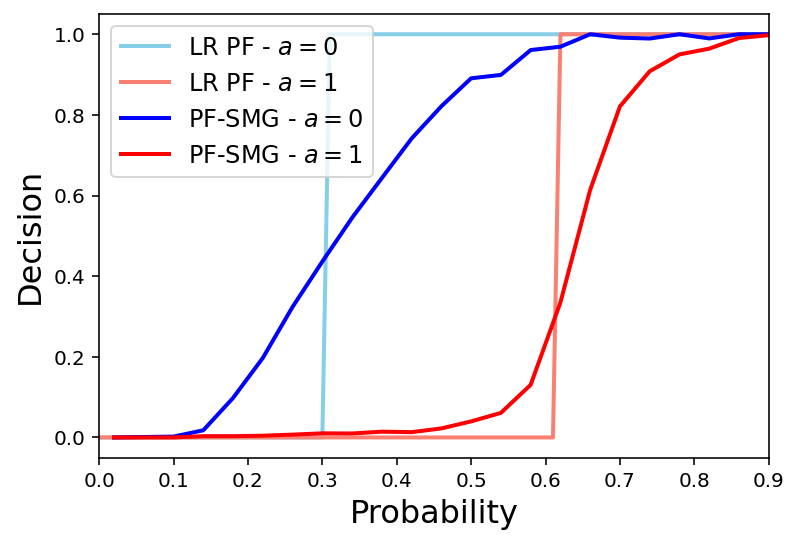}
    \\
    \small (a) \quad\quad\quad\quad\quad\quad\quad\quad\quad\quad\quad\quad\quad\quad\quad\quad
    (b) \quad\quad\quad\quad\quad\quad\quad\quad\quad\quad\quad\quad\quad\quad\quad\quad
    (c)
    \caption{Comparison of the Pareto frontier (PF) obtained by the PF-SMG algorithm and our approach (performed on a Logistic Regression estimate $\hat p_{LR}$) for the Adult Income datatset. Figure (a) shows both Pareto frontiers with Disparate Impact (DI) as fairness score and Accuracy as DM utility. Figures (b)-(c) show for two levels of DI the corresponding decision rules per group. }
    \label{fig: section 5}
\end{figure}

However, when we compare the Pareto-optimal decision rules obtained from both approaches, it becomes clear that this not depending on $a$ does not explain the performance difference. For this, we reconstruct the decision vector $d(p)$ of both our solution and the PF-SMG solution by partitioning the interval $p\in[0,1]$ in 25 bins, assigning each individual to the associated bin according to their estimated probability $\hat p_{LR}$, and calculating the fraction of individuals with $D=1$ in each bin, which yields an estimate of $d(p)=P[D=1|p]$.

We evaluated the decision rules at two points on the Pareto frontier  (indicated by (b) and (c) in Fig. \ref{fig: section 5}(a)). Location (b) represents the accuracy maximizing decision rule for both approaches, while location (c) represent a decision rule with less disparate impact.  
Figure \ref{fig: section 5}(b) shows the accuracy maximizing decision rule for both approaches. Recall that the DM utility maximizing decision rule (without fairness constraint) is given by Equation (\ref{eq: theoretical max threshold rule}) where $p_i=-\beta/\alpha=0.5$. This holds for both groups. Our approach reproduces this theoretical result (with slight stochastic deviations due to the finiteness of the training set). The PF-SMG approach implements a smoothed-out approximation of this threshold rule, as expected independent on the group. However, since the threshold rule is not exactly reproduced, the accuracy is lower than in our solution (see Fig. \ref{fig: section 5}(a), points (b)). Since, for this situation, the optimum decision rule is the same for both groups, our approach has no advantage over PF-SMG. Thus the performance difference between points (b) in Fig. \ref{fig: section 5}(a) cannot be attributed to explicitly using $a$, but is rather an effect of structural or algorithmic limitations of PF-SMG. 

Figure \ref{fig: section 5}(c) shows the decision rules corresponding to $FS=0.05$ (points (c) in Fig. \ref{fig: section 5}(a)). We find that the PF-SMG decision rule mimics the optimal decision rule by actually implementing group-specific threshold rules -- notably without having access to the sensitive attribute $a$. This shows that excluding the sensitive attribute $a$ in the decision process does not necessarily exclude implementing group-specific threshold rules (compare the discussion in Section \ref{subsec: construction of PF}).

Furthermore, a reduction of disparate impact requires a decrease of the threshold for group $a=0$, and an increase for group $a=1$ (compare  Fig. \ref{fig: section 5}(b) with Fig. \ref{fig: section 5}(c)). This change in the decision rule is realized by the PF-SMG solution, however again with a smoothed-out threshold. Obviously, even if PF-SMG may not be able to implement the optimum decision rule due to structural limitations and missing access to $a$, its search for Pareto-optimal solutions leads to decision rules which approximate the theoretically optimal group-dependent threshold rules. Even if this is an anecdotic observation only, we expect that this generalizes to all approaches for finding Pareto-optimal solutions. We leave the validation of this expectation to further research.

The presented example also shows how any given decision system can be compared with the Pareto frontier derived with our approach: Comparing the achieved values $(E(U),FS)$ with the set of Pareto-optimal solutions allows to determine easily how far away any given solution is from the Pareto frontier. This highlights the importance of our findings and the relevance of the technology-agnostic reference frontier that we suggest in this paper.

\section{Discussion and conclusions}\label{sec:Conclusions}

In this paper we provide a characterization of the Pareto frontier capturing the trade-off between performance and fairness in binary prediction-based decision systems. Performance is measured by expected decision maker (DM) utility, while group fairness is quantified through a fairness score ($FS$) defined as a function of the expected decision subject (DS) utilities of the considered social groups.

Theorem \ref{thm: Pareto frontier} proves that the Pareto frontier consists of group-specific threshold rules applied to the probabilities $P[Y=1|\vec x]$ which might include upper-bound threshold rules. With this we generalize the results of \cite{hardt_equality_2016,corbett-davies_algorithmic_2017} for the cases of Demographic Parity and misclassification by enabling arbitrary fairness notions and arbitrary principles of justice (beyond egalitarianism). The proof of Theorem \ref{thm: threshold decision rule} shows that, for any DS utility matrix with $\tilde p^*=-(v_{10}-v_{00}) /(v_{11}-v_{10}+v_{00}-v_{01})\notin[0,1]$, Pareto-optimal solutions must be lower-bound threshold rules, confirming \cite{hardt_equality_2016,corbett-davies_algorithmic_2017}. However, if $\tilde p^*\in[0,1]$, both lower- and upper-bound threshold rules might turn out to be be Pareto optimal (as shown with an example in Section \ref{sec: illustrative example} Fig. \ref{fig: synth data examples}). Similarly, the results for $J=D$ generalizes the work of \cite{baumann_enforcing_2022} (showing that ub threshold might occur when enforcing of PPV or FOR parity) to more generalized DS utility matrices. 

The location of the Pareto frontier can be determined with any Bayes-optimal predictor of $P[Y=1|\vec x]$, and approximated with near Bayes-optimal predictors. However, an important insight of our paper is that the location of the Pareto-frontier only depends on the probability distributions $g(p|a)$ of the groups as demonstrated in Section \ref{sec: illustrative example}, thus on relatively simple characteristics of the population: It is not necessary to learn the full dependency structure of $\vec x$ and $Y$, it suffices to solve the much simpler problem of deriving $g(p)$ from the learning data. This observation opens up the field for new and improved ML methods that are trained to find the probability distributions $g(p|a)$ instead of the correct classification $\hat Y$ or prediction $\hat p$. Note that implementing a Pareto-optimal decision system still requires to have a good predictor (ideally Bayes-optimal) for $P[Y=1|\vec x]$, but for determining the location of the Pareto frontier, e.g. for assessing a given system and comparing it with the theoretically optimal system, this is not necessary.  

Note that the Pareto frontier resulting from Theorem \ref{thm: Pareto frontier} represents a fundamental limit: No decision system based on the same feature vector $\vec x$ can achieve better solutions. In other words: Any ML approach, whether based on pre-, in-, or postprocessing, trying to identify Pareto-optimal solutions will tend to replicate the same decision behavior, as long as the structure of the used decision model allows for group-specific threshold rules. Therefore, our results provide an implementation-agnostic benchmark which can be used to assess algorithms like \cite{liu_2022_accuracy,valdivia_2021_fair,zhang_2022_mitigating,wei_2022_fairness}. In Section \ref{sec: comparative study} (Fig.\ref{fig: section 5}(a)) we demonstrated this for the PF-SMG algorithm \cite{liu_2022_accuracy}.   

Contrary to in-processing methods for finding the Pareto frontier such as  \cite{liu_2022_accuracy,valdivia_2021_fair,zhang_2022_mitigating,wei_2022_fairness}, our result not only specifies the location of the Pareto frontier, but also characterizes the the Pareto-optimal solutions with respect to their decision characteristics, thus allowing to assess their moral implications. One of the implications of our findings is that optimizing for Pareto-optimal tradeoffs between DM utility and fairness tends to produce group-dependent threshold rules, regardless of whether they are implemented explicitly through post-processing or implicitly (and unknowningly) through black-box in-processing algorithms. The example of Section \ref{sec: comparative study} demonstrates that group-dependent threshold rules may be implemented even though the algorithm has no access to the sensitive attribute $a$. 
Furthermore, as stated before, the Pareto frontier may involve upper-bound threshold rules, phenomena as cherry-picking or within-group unfairness \cite{favier_cherry_2025, baumann_enforcing_2022} may be expected independent on the learning method, even though they may be obscured by model opacity. Both insights might raise concerns given ongoing debates about the moral permissibility of such behavior \cite{weerts_2022_goodhartslaw, favier_cherry_2025}.

Our findings also have legal implications: If group-dependent thresholds are considered discriminatory \cite{pessach_2022_review}, but high-performing ML methods tend to converge to the same decision behavior (even without using $a$), this raises the question whether such implementations should be considered discriminatory as well, and what this implies for the distinction between direct and indirect discrimination. We hope that further research will elaborate on such questions.

It should be stressed that our paper also has some limitations. The most important one is that our framework and the optimality theorem is based on one single fairness score. It cannot be applied to the simultaneous use of multiple fairness scores, such as for criteria like Equal Odds (FPR and FNR parity) or sufficiency (FPR and FOR parity) \cite{barocas_fairness_2023}. Also, while our framework allows group-specific utility functions, it does not cover individualized utility functions. The respective generalization of our results in these directions is left for future work.

\section*{Generative AI usage statement}
Generative AI tools were used to support grammatical correction and to improve the fluency of the text. All sections were initially written by the authors, and the use of generative AI was limited to stylistic refinement while keeping the text as close to the original as possible. The authors take full responsibility for the content of the paper.

\begin{acks}
We thank our three anonymous reviewers for their useful feedback throughout the reviewing process. This work is supported by a grant of the \textit{Digitalisierungsinitiative der Zürcher Hochschulen}.  
\end{acks}

\bibliographystyle{ACM-Reference-Format}
\bibliography{references}

@article{favier_cherry_2025,
  title={Cherry on the cake: fairness is NOT an optimization problem},
  author={Favier, Marco and Calders, Toon},
  journal={Machine Learning},
  volume={114},
  number={7},
  pages={160},
  year={2025},
  publisher={Springer},
  url = {https://doi.org/10.1007/s10994-025-06792-3}
}

@inproceedings{laufer_LDA_2025, series={CSLAW ’25},
   title={What Constitutes a Less Discriminatory Algorithm?},
   url={http://dx.doi.org/10.1145/3709025.3712214},
   DOI={10.1145/3709025.3712214},
   booktitle={Proceedings of the Symposium on Computer Science and Law on ZZZ},
   publisher={ACM},
   author={Laufer, Benjamin and Raghavan, Manish and Barocas, Solon},
   year={2025},
   month=mar, pages={136–151},
   collection={CSLAW ’25} }

@article{black_LDA_2024,
   author = {Emily Black and John Logan Koepke and Pauline T Kim and Solon Barocas and Mingwei Hsu},
   journal = {The Georgetown Law Journal},
   volum = {113},
   pages = {53--120},
   title = {Less Discriminatory Algorithms},
   url = {https://civilrights.org/2014/02/27/civil-rights-principles-era-},
   year = {2024}
}

@article{hardt_equality_2016,
  author       = {Moritz Hardt and
                  Eric Price and
                  Nathan Srebro},
  title        = {Equality of Opportunity in Supervised Learning},
  journal      = {CoRR},
  volume       = {abs/1610.02413},
  year         = {2016},
  url          = {http://arxiv.org/abs/1610.02413},
  eprinttype    = {arXiv}
}

@inproceedings{corbett-davies_algorithmic_2017,
author = {Corbett-Davies, Sam and Pierson, Emma and Feller, Avi and Goel, Sharad and Huq, Aziz},
title = {Algorithmic Decision Making and the Cost of Fairness},
year = {2017},
isbn = {9781450348874},
publisher = {ACM},
url = {https://doi.org/10.1145/3097983.3098095},
doi = {10.1145/3097983.3098095},
booktitle = {Proceedings of the 23rd ACM SIGKDD International Conference on Knowledge Discovery and Data Mining},
pages = {797–806},
numpages = {10},
series = {KDD '17}
}

@article{hertweck_justice-based_2023,
      title={A Justice-Based Framework for the Analysis of Algorithmic Fairness-Utility Trade-Offs}, 
      author={Corinna Hertweck and Joachim Baumann and Michele Loi and Eleonora Viganò and Christoph Heitz},
      year={2023},
      archivePrefix={arXiv},
      url={https://arxiv.org/abs/2206.02891}
}

@inproceedings{baumann_enforcing_2022, series={FAccT ’22},
   title={Enforcing Group Fairness in Algorithmic Decision Making: Utility Maximization Under Sufficiency},
   url={http://dx.doi.org/10.1145/3531146.3534645},
   DOI={10.1145/3531146.3534645},
   booktitle={2022 ACM Conference on Fairness Accountability and Transparency},
   publisher={ACM},
   author={Baumann, Joachim and Hannák, Anikó and Heitz, Christoph},
   year={2022},
   month=jun, pages={2315–2326},
   collection={FAccT ’22} }

@inproceedings{verma_2018_fairness,
  title={Fairness definitions explained},
  author={Verma, Sahil and Rubin, Julia},
  booktitle={Proceedings of the international workshop on software fairness},
publisher = {ACM},
url = {https://doi.org/10.1145/3194770.3194776},
  pages={1--7},
  year={2018},
series = {FairWare '18}
}

@book{barocas_fairness_2023,
  title = {Fairness and Machine Learning: Limitations and Opportunities},
  author = {Solon Barocas and Moritz Hardt and Arvind Narayanan},
  publisher = {MIT Press},
  year = {2023}
}

@article{mitchell_algorithmic_2021,
  title={Algorithmic Fairness: Choices, Assumptions, and Definitions},
  author={Shira Mitchell and Eric Potash and Solon Barocas and Alexander D'Amour and Kristian Lum},
  journal={Annual Review of Statistics and Its Application},
  year={2021},
  url={https://api.semanticscholar.org/CorpusID:228893833}
}

@inproceedings{hertweck_group_2024,
author = {Hertweck, Corinna and Loi, Michele and Heitz, Christoph},
year = {2024},
month = {05},
pages = {189-196},
title = {Group Fairness Refocused: Assessing the Social Impact of ML Systems},
booktitle={2024 11th IEEE Swiss Conference on Data Science (SDS)},
doi = {10.1109/SDS60720.2024.00034}
}

@article{arneson_egalitarianism_2002,
author = {Arneson, Richard J.},
title = {Egalitarianism and the Undeserving Poor},
journal = {Journal of Political Philosophy},
volume = {5},
number = {4},
pages = {327-350},
doi = {https://doi.org/10.1111/1467-9760.00037},
year = {1997}
}

@article{hertweck_distributive_2024, 
title={What’s Distributive Justice Got to Do with It? Rethinking Algorithmic Fairness from a Perspective of Approximate Justice},
DOI={10.1609/aies.v7i1.31661}, 
number={1}, 
journal={Proceedings of the AAAI/ACM Conference on AI, Ethics, and Society}, 
author={Hertweck, Corinna and Heitz, Christoph and Loi, Michele}, 
year={2024}, 
month={Oct.}, 
pages={597-608},
doi = {https://doi.org/10.1609/aies.v7i1.31661},
series = {AIES'24}
}

@book{rawls_justice_1999,
  address = {Cambridge, MA and London, England},
  author = {John Rawls},
  publisher = {Harvard University Press},
  title = {A Theory of Justice: Revised Edition},
  year = {1999}
}

@article{frankfurt_equality_1987, 
author = {Harry Frankfurt}, 
doi = {10.1086/292913}, 
journal = {Ethics}, 
number = {1}, 
pages = {21--43}, 
publisher = {Cambridge University Press}, 
title = {Equality as a Moral Ideal}, 
volume = {98}, 
year = {1987}}

@incollection{holtug_prioritarianism_2007,
	author = {Nils Holtug},
	booktitle = {Egalitarianism: new essays on the nature and value of equality},
	editor = {Nils Holtug and Kasper Lippert{-}Rasmussen},
	pages = {125--156},
	publisher = {Clarendon Press},
	title = {Prioritarianism},
	year = {2007}
}

@inproceedings{fleisher_fair_2021,
author = {Fleisher, Will},
title = {What's Fair about Individual Fairness?},
year = {2021},
isbn = {9781450384735},
publisher = {ACM},
url = {https://doi.org/10.1145/3461702.3462621},
doi = {10.1145/3461702.3462621},
booktitle = {Proceedings of the 2021 AAAI/ACM Conference on AI, Ethics, and Society},
pages = {480–490},
numpages = {11},
series = {AIES '21}
}

@inproceedings{dwork_fairness_2012,
author = {Dwork, Cynthia and Hardt, Moritz and Pitassi, Toniann and Reingold, Omer and Zemel, Richard},
title = {Fairness through awareness},
year = {2012},
isbn = {9781450311151},
publisher = {ACM},
url = {https://doi.org/10.1145/2090236.2090255},
doi = {10.1145/2090236.2090255},
booktitle = {Proceedings of the 3rd Innovations in Theoretical Computer Science Conference},
pages = {214–226},
numpages = {13},
series = {ITCS '12}
}

@article{kozodoi_fairness_2022,
title = {Fairness in credit scoring: Assessment, implementation and profit implications},
journal = {European Journal of Operational Research},
volume = {297},
number = {3},
pages = {1083-1094},
year = {2022},
issn = {0377-2217},
doi = {https://doi.org/10.1016/j.ejor.2021.06.023},
url = {https://www.sciencedirect.com/science/article/pii/S0377221721005385},
author = {Nikita Kozodoi and Johannes Jacob and Stefan Lessmann}
}

@article{fuster_predictably_2017,
  author  = {Andreas Fuster and Paul Goldsmith-Pinkham and Tarun Ramadorai and Ansgar Walther},
  title   = {Predictably Unequal? The Effects of Machine Learning on Credit Markets},
  journal = {The Journal of Finance},
  year    = {2017},
  volume  = {77},
  number  = {1},
  pages   = {5--47},
  doi     = {10.1111/jofi.13090}
}

@article{pessach_2022_review,
author = {Pessach, Dana and Shmueli, Erez},
title = {A Review on Fairness in Machine Learning},
year = {2022},
issue_date = {March 2023},
publisher = {ACM},
volume = {55},
number = {3},
issn = {0360-0300},
doi = {https://doi.org/10.1145/3494672},
journal = {ACM Computing Surveys (CSUR)},
month = feb,
articleno = {51},
numpages = {44}
}

@article{friedler_2018_impossibility,
  author       = {Sorelle A. Friedler and
                  Carlos Scheidegger and
                  Suresh Venkatasubramanian},
  title        = {On the (im)possibility of fairness},
  journal      = {CoRR},
  volume       = {abs/1609.07236},
  year         = {2016},
  url          = {http://arxiv.org/abs/1609.07236},
  eprinttype    = {arXiv}
}

@article{weerts_2022_goodhartslaw,
  title={Are There Exceptions to Goodhart's Law? On the Moral Justification of Fairness-Aware Machine Learning},
  author={Weerts, Hilde and Royakkers, Lamb{\`e}r and Pechenizkiy, Mykola},
  journal={ACM Journal on Responsible Computing},
  year={2026},
  publisher={ACM},
volume = {3},
number = {1},
doi = {https://doi.org/10.1145/3771734}
}

@article{moscato_2021_benchmark,
  title={A benchmark of machine learning approaches for credit score prediction},
  author={Moscato, Vincenzo and Picariello, Antonio and Sperl{\'\i}, Giancarlo},
  journal={Expert Systems with Applications},
  volume={165},
  pages={113986},
  year={2021},
  publisher={Elsevier},
doi = {https://doi.org/10.1016/j.eswa.2020.113986}
}

@article{beam_2018_big,
  title={Big data and machine learning in health care},
  author={Beam, Andrew L and Kohane, Isaac S},
  journal={JAMA},
  volume={319},
  number={13},
  pages={1317--1318},
  year={2018},
  publisher={American Medical Association},
  doi = {10.1001/jama.2017.18391},
  url = {https://doi.org/10.1001/jama.2017.18391}
}

@article{habehh_2021_machine,
  title={Machine learning in healthcare},
  author={Habehh, Hafsa and Gohel, Suril},
  journal={Current genomics},
  volume={22},
  number={4},
  pages={291--300},
  year={2021},
  publisher={Bentham Science Publishers direct}
}

@inproceedings{shailaja_2018_machine,
  title={Machine learning in healthcare: A review},
  author={Shailaja, K and Seetharamulu, Banoth and Jabbar, MA},
  booktitle={2018 Second international conference on electronics, communication and aerospace technology (ICECA)},
  pages={910--914},
  year={2018},
  publisher={IEEE},
doi={10.1109/ICECA.2018.8474918}
}

@article{van_2021_digital,
  title={Digital welfare fraud detection and the Dutch SyRI judgment},
  author={Van Bekkum, Marvin and Borgesius, Frederik Zuiderveen},
  journal={European Journal of Social Security},
  volume={23},
  number={4},
  pages={323--340},
  year={2021},
  publisher={SAGE Publications Sage UK: London, England},
doi = {https://doi.org/10.1177/13882627211031257}
}

@article{lagioia_2023_algorithmic,
  title={Algorithmic fairness through group parities? The case of COMPAS-SAPMOC},
  author={Lagioia, Francesca and Rovatti, Riccardo and Sartor, Giovanni},
  journal={AI \& society},
  volume={38},
  number={2},
  pages={459--478},
  year={2023},
  publisher={Springer},
doi = {https://doi.org/10.1007/s00146-022-01441-y}
}

@article{caton_fairness_2020,
  author       = {Simon Caton and
                  Christian Haas},
  title        = {Fairness in Machine Learning: {A} Survey},
  journal      = {CoRR},
  volume       = {abs/2010.04053},
  year         = {2020},
  url          = {https://arxiv.org/abs/2010.04053},
  eprinttype    = {arXiv}
}

@article{wachter_2021_fairness,
  title={Why fairness cannot be automated: Bridging the gap between EU non-discrimination law and AI},
  author={Wachter, Sandra and Mittelstadt, Brent and Russell, Chris},
  journal={Computer Law \& Security Review},
  volume={41},
  pages={105567},
  year={2021},
  publisher={Elsevier},
doi = {https://doi.org/10.1016/j.clsr.2021.105567}
}

@inproceedings{weerts_2023_algorithmic,
  title={Algorithmic unfairness through the lens of EU non-discrimination law: Or why the law is not a decision tree},
  author={Weerts, Hilde and Xenidis, Rapha{\"e}le and Tarissan, Fabien and Olsen, Henrik Palmer and Pechenizkiy, Mykola},
  booktitle={Proceedings of the 2023 ACM Conference on Fairness, Accountability, and Transparency},
  pages={805--816},
  year={2023},
publisher = {ACM},
url = {https://doi.org/10.1145/3593013.3594044},
series = {FAccT '23}
}

@article{hu_2018_welfare,
  title={Welfare and distributional impacts of fair classification},
  author={Hu, Lily and Chen, Yiling},
archivePrefix={arXiv},
  year={2018},
url={https://arxiv.org/abs/1807.01134}
}

@article{liu_2022_accuracy,
  title={Accuracy and fairness trade-offs in machine learning: A stochastic multi-objective approach},
  author={Liu, Suyun and Vicente, Luis Nunes},
  journal={Computational Management Science},
  volume={19},
  number={3},
  pages={513--537},
  year={2022},
  publisher={Springer},
doi = {https://doi.org/10.1007/s10287-022-00425-z}
}

@article{tang_2023_theoretical,
  title={A theoretical approach to characterize the accuracy-fairness trade-off pareto frontier},
  author={Tang, Hua and Cheng, Lu and Liu, Ninghao and Du, Mengnan},
  year={2023},
archivePrefix={arXiv},
url={https://arxiv.org/abs/2310.12785}
}

@article{valdivia_2021_fair,
  title={How fair can we go in machine learning? Assessing the boundaries of accuracy and fairness},
  author={Valdivia, Ana and S{\'a}nchez-Monedero, Javier and Casillas, Jorge},
  journal={International Journal of Intelligent Systems},
  volume={36},
  number={4},
  pages={1619--1643},
  year={2021},
  publisher={Wiley Online Library},
doi = {https://doi.org/10.1002/int.22354}
}

@article{zhang_2022_mitigating,
  title={Mitigating unfairness via evolutionary multiobjective ensemble learning},
  author={Zhang, Qingquan and Liu, Jialin and Zhang, Zeqi and Wen, Junyi and Mao, Bifei and Yao, Xin},
  journal={IEEE transactions on evolutionary computation},
  volume={27},
  number={4},
  pages={848--862},
  year={2022},
  publisher={IEEE},
doi={10.1109/TEVC.2022.3209544}
}

@article{wei_2022_fairness,
  title={The fairness-accuracy Pareto front},
  author={Wei, Susan and Niethammer, Marc},
  journal={Statistical Analysis and Data Mining: The ASA Data Science Journal},
  volume={15},
  number={3},
  pages={287--302},
  year={2022},
  publisher={Wiley Online Library},
doi = { https://doi.org/10.1002/sam.11560}
}

@inproceedings{silvia_2020_general,
  title={A general approach to fairness with optimal transport},
  author={Silvia, Chiappa and Ray, Jiang and Tom, Stepleton and Aldo, Pacchiano and Heinrich, Jiang and John, Aslanides},
  booktitle={Proceedings of the AAAI Conference on Artificial Intelligence},
  volume={34},
  number={04},
  pages={3633--3640},
  year={2020},
doi = {https://doi.org/10.1609/aaai.v34i04.5771}
}

@inproceedings{jang_2022_group,
  title={Group-aware threshold adaptation for fair classification},
  author={Jang, Taeuk and Shi, Pengyi and Wang, Xiaoqian},
  booktitle={Proceedings of the AAAI Conference on Artificial Intelligence},
  volume={36},
  number={6},
  pages={6988--6995},
  year={2022},
doi = {https://doi.org/10.1609/aaai.v36i6.20657}
}

@article{calders_2010_three,
  title={Three naive bayes approaches for discrimination-free classification},
  author={Calders, Toon and Verwer, Sicco},
  journal={Data mining and knowledge discovery},
  volume={21},
  number={2},
  pages={277--292},
  year={2010},
  publisher={Springer},
  doi = {https://doi.org/10.1007/s10618-010-0190-x}
}

@misc{baumann_2023_distributivejusticefoundationalpremise,
      title={Distributive Justice as the Foundational Premise of Fair ML: Unification, Extension, and Interpretation of Group Fairness Metrics}, 
      author={Joachim Baumann and Corinna Hertweck and Michele Loi and Christoph Heitz},
      year={2023},
      eprint={2206.02897},
      archivePrefix={arXiv},
      primaryClass={cs.CY},
      url={https://arxiv.org/abs/2206.02897}, 
}

@misc{cen_2025_auditsresourcedataaccess,
      title={Audits Under Resource, Data, and Access Constraints: Scaling Laws For Less Discriminatory Alternatives}, 
      author={Sarah H. Cen and Salil Goyal and Zaynah Javed and Ananya Karthik and Percy Liang and Daniel E. Ho},
      year={2025},
      eprint={2509.05627},
      archivePrefix={arXiv},
      primaryClass={cs.CY},
      url={https://arxiv.org/abs/2509.05627}, 
}

\appendix

\section{Utility-based fairness evaluation} \label{app: utility fairness}
In this section, we show that the classical confusion-based metrics are special cases of the expectation value $E[V|J=j]$ with specific choices of $V,J,j$ (see also \cite{baumann_2023_distributivejusticefoundationalpremise}). For notational convenience, we drop the class index $a$ in the following.

\

\noindent \textbf{Decision rate $P[D=1]$} 

The decision rate is given by $P[D=1] = p_{10}+p_{11}$. 
The unconditional DS subject utility is given by 
\begin{equation*}
    E[V]=v_{00}p_{00}+v_{01}p_{01}+v_{10}p_{10}+v_{11}p_{11} 
\end{equation*}

Both quantities are equal if $v_{00}=v_{01}=0$ and $v_{10}=v_{11}=1$. Thus, $P[D=1] = E[V|J=j]$ for the choices $(v_{ij})=(0,0;1,1)$ and $J=\emptyset$.

\noindent \textbf{True Positive Rate (TPR), False Positive Rate (FPR), True Negative Rate (TNR), False Negative Rate (FNR)} 

We start with TPR, given by the conditional probability $P[D=1|Y=1]$, which can be expressed as 
\begin{equation*}
    P[D=1|Y=1]=\frac{P[D=1,Y=1]}{P[Y=1]} =\frac{p_{11}}{p_{01}+p_{11}}
\end{equation*}

The expected decision subject utility conditioned on $Y=1$ is given by 
\begin{equation*}
    E[V|Y=1]=\frac{E[V,Y=1]}{E[Y=1]}=\frac{v_{01}p_{01}+v_{11}p_{11}}{p_{01}+p_{11}}
\end{equation*}

Both quantities are equal if $v_{01}=0$ and $v_{11}=1$. Similarly, it is easy to show that FPR $P[D=1|Y=0]$ equals $E[V|Y=0]$ if $v_{00}=0$ and $v_{10}=1$. 

Thus, with a choice of $(v_{ij})=(0,0;1,1)$ (same DS utility matrix as for the case of the selection rate) and $J=Y$, both metrics can be met: $P[D=1|Y=y]$ equals $E(V|J=j)$ for the choice $(v_{ij})=(0,0;1,1)$, $J=Y$, and $j=y$.

Analogously, TNR and FNR DS utility matrix $(v_{ij})=(0,0;1,1)$ needs to be chosen. 

\

\noindent \textbf{Positive Predictive Value (PPV), False Omission Rate (FOR), Negative Predictive Parity (NPR), False Discovery Rate (FDR) } 

The confusion matrix metrics conditioned on the value of $D$ (e.g. $PPV=P[Y=1|D=1]$) are derived equivalently. 

For both PPV and FOR, the DS utility matrix $(v_{ij})=(0, 1; 0, 1)$ can be used, and $P[Y=1|D=d] = E[V|J=j]$ for a choice of $(v_{ij})=(0, 1; 0, 1)$, $J=D$, and $j=d$. For NPR and FDR, the DS utility matrix is $(v_{ij})=(1, 0; 1, 0)$.

\section{Proof of Theorem \ref{thm: threshold decision rule} (unconditional case)} \label{app: proof theorem threshold rule}
In this Section we proof Theorem \ref{thm: threshold decision rule} for the unconditional case $E[V|a]=\eta$ (i.e. $J=\emptyset$). The  conditional case $E[V|J,a]=\eta$ with $J=\{Y,D\}$ is discussed in Appendix \ref{app: addendum proof 3.1}. 

\begin{proof}
    Let $d\in\mathcal D_\eta $ be an arbitrary decision rule with $E_{d}[V|a]=\eta$ and $E_d[U|a]=U_0$. We consider the discretized version of the decision rule $d=(d_1,...,d_N)$, making the limit $N\rightarrow \infty$ at the end of the proof. A lower-bound threshold rule is characterized by $d_i=0$ for $i<i_t$, $d_i=1$ for $i>i_t$ and $d_{i_t}\in[0,1]$, and an upper-bound threshold rule by $d_i=1$ for $i<i_t$, $d_i=0$ for $i>i_t$ and $d_{i_t}\in [0,1]$. 

    We now consider a small change ($\Delta_j,\Delta_k$) of ($d_j,d_k$), i.e. $d_j\mapsto d_j+\Delta_j$ and $ d_k\mapsto d_k+\Delta_k$. From Equations (\ref{eq: riemann sum E(U)}) and (\ref{eq: riemann sum E(V)}) we derive that a change in decision $d_j$ only affects $E[U]$ via the term $\alpha p_j+\beta$ and $E[V]$ via $\tilde{\alpha}p_j+\tilde{\beta}$. So, a small change $(\Delta_j,\Delta_k$) of ($d_j,d_k$) changes $E[U]$ by $\Delta E[U|a]=\Delta_jf(p_j)w_j+\Delta_kf(p_k)w_j$ and $E[V|a]$ by $\Delta E[V|a]=\Delta_j\tilde f( p_j)w_j+\Delta_k\tilde f( p_k)w_k$, with $f(p)=\alpha\cdot p+\beta$ and $\tilde{f}(p)=\tilde{\alpha}\cdot p+\tilde{\beta}$. In the following we write $p^*$ and $\tilde p^*$ for the intersection points of $f(p)$ and $\tilde f(p)$ with the $x$-axis, i.e. $f(p^*)=0$ and $\tilde f(\tilde p^*)=0$.
    
    We consider changes $(\Delta_j,\Delta_k)$ such that $E[V|a]=\eta$ does not change, i.e. $\Delta E[V|a]=0$. This can be done by choosing a combination ($\Delta_j,\Delta_k$) such that the following condition holds
    \begin{equation}\label{eq:delta dj}
        \Delta_j = -\frac{\tilde{f}(p_k)w_k}{\tilde{f}(p_j)w_j}\Delta_k.
    \end{equation}
    Imposing this condition on the simultaneous change $(\Delta_j,\Delta_k)$ results in a change in $E[U|a]$ given by
    \begin{equation}\label{eq: delta Udm}
        \begin{split}
        \Delta E[U|a] & = \Delta_j f(p_j)w_j+\Delta_k f(p_k)w_k \\
         & =\Delta_k  w_k\tilde{f}(p_k)\Big(f(p_k)/\tilde{f}(p_k)-f(p_j)/\tilde{f}(p_j)\Big) \\
         & = \Delta_k  w_k\tilde{f}(p_k)\Big(\mathcal{F}(p_k)-\mathcal{F}(p_j)\Big)
        \end{split}
    \end{equation}
    with $\mathcal{F}(p)=f(p)/\tilde{f}(p)$. By choosing the appropriate sign for $\Delta_k$, we can increase the DM utility (i.e. $\Delta E[U|a]>0$). 
    
    Note that the sign we should choose for $\Delta_k$ in order to achieve $\Delta E[U|a]>0$ depends on the shapes of $\tilde f(p)$ and $\mathcal F(p)$. The shape of $f(p)$ is fixed on the interval $[0,1]$ in the sense that $f(0)<0$, $f(1)>0$ and the intersection point $p^*=-\beta/\alpha\in[0,1]$ (cmp. conditions stated in Section \ref{subsec: utility-based}). Figures \ref{fig: plots f functions}(a)-(d) (Appendix \ref{app: plots f-functions}) show the four possible shapes that $\tilde{f}(p)$ can have with respect to $f(p)$ and Figures \ref{fig: plots f functions}(e)-(f) shows for each of those scenario's the shape of $\mathcal F(p)=f(p)/\tilde{f}(p)$.

    \textbf{In a first case}, we assume $\tilde p^*\not \in[0,1]$. Note that it always holds that $p^*\in[0,1]$ due to the fact that $\alpha>0$ and $\beta<0$ (see Section \ref{subsec: utility-based}). We consider $(j,k)$ with $p_j<p_k$. 
    \begin{itemize}
        \item[(a)] $\tilde{a}>0$ and $p^*>\tilde p^*$ (Fig. \ref{fig: plots f functions}(a)) : From $\tilde p^*<p^*$, $p^*<1$ and $\tilde p^*\not \in[0,1]$ it follows that $\tilde p^*<0$. With $\tilde{a}>0$ it follows that $\tilde f(p)>0$ for all $p\in[0,1]$, so $\tilde f(p_k)>0$. Furthermore, $\big(\mathcal F(p_k)-\mathcal F(p_j)\big) > 0$ (Fig. \ref{fig: plots f functions}(e) right of vertical asymptote) for all possible $p_j,p_k\in[0,1]$ with $p_j<p_k$. Therefore, $\Delta E[U|a]>0$ requires $\Delta_k>0$ according to Equation (\ref{eq: delta Udm}), leading to $\Delta _j<0$ (Equation (\ref{eq:delta dj})).
        
        \item[(b)] $\tilde{a}<0$ and $p^*<\tilde p^*$ (Fig. \ref{fig: plots f functions}(b)): From $\tilde p^*>p^*$, $p^*>0$ and $\tilde p^*\notin[0,1]$ it follows that $\tilde p^*>1$. With $\tilde\alpha<0$ it follows that $\tilde f(p)>0$ for all $p\in[0,1]$, so $\tilde f(p_k)>0$. Furthermore, $\big(\mathcal F(p_k)-\mathcal F(p_j)\big) > 0$ (Fig. \ref{fig: plots f functions}(e) left of vertical asymptote) for all possible $p_j,p_k\in[0,1]$ with $p_j<p_k$. Therefore, $\Delta E[U|a]>0$ requires $\Delta_k>0$ according to Equation (\ref{eq: delta Udm}), leading to $\Delta _j<0$ (Equation (\ref{eq:delta dj})).
        
        \item[(c)] $\tilde{a}>0$ and $p^*<\tilde p^*$ (Fig. \ref{fig: plots f functions}(c)): From $\tilde p^*>p^*$, $p^*>0$ and $\tilde p^*\notin[0,1]$ it follows that $\tilde p^*>1$. With $\tilde\alpha>0$ it follows that $\tilde f(p)<0$ for all $p\in[0,1]$, so $\tilde f(p_k)<0$. Furthermore, $\big(\mathcal F(p_k)-\mathcal F(p_j)\big) < 0$ (Fig. \ref{fig: plots f functions}(f) left of vertical asymptote) for all possible $p_j,p_k\in[0,1]$ with $p_j<p_k$. Therefore, $\Delta E[U|a]>0$ requires $\Delta_k>0$ according to Equation (\ref{eq: delta Udm}), leading to $\Delta _j<0$ (Equation (\ref{eq:delta dj})).
        
        \item[(d)] $\tilde{a}<0$ and $p^*>\tilde p^*$ (Fig. \ref{fig: plots f functions}(d)): From $\tilde p^*< p^*$, $p^*<1$ and $\tilde p^*\notin [01,]$ it follows that $\tilde p^*<0$. With $\tilde\alpha<0$ it follows that $\tilde f(p)<0$ for all $p\in[0,1]$, so $\tilde f(p_k)<0$. Furthermore, $\big(\mathcal F(p_k)-\mathcal F(p_j)\big) < 0$ (Fig. \ref{fig: plots f functions}(f) right of vertical asymptote) for all possible $p_j,p_k\in[0,1]$ with $p_j<p_k$. Therefore, $\Delta E[U|a]>0$ requires $\Delta_k>0$ according to Equation (\ref{eq: delta Udm}), leading to $\Delta _j<0$ (Equation (\ref{eq:delta dj})).
    \end{itemize}
    Note that all possible scenarios lead to the same outcome: any given decision rule $d(p)$ can be improved with respect to $E[U|a]$ (without changing $E[V|a]$), if there is at least one  combination $(d_j,d_k)$ with $d_j>0$ and $d_k<1$. Notice that unless $d(p)$ is a lower-bound threshold rule as defined above, there always exists such a combination. Thus: For $\tilde p^*\notin[0,1]$, the decision rule maximizing $E[U|a]$ must be a lower-bound threshold rule. 

    \textbf{In a second case}, we assume $\tilde p^* \in[0,1]$, again considering $(j,k)$ with $p_j<p_k$. Let $i^*$ be the index of the bin which contains $\tilde p^*$. Again, four different scenarios need to be distinguished: 
    \begin{itemize}
        \item[(a)] $\tilde{a}>0$ and $p^*>\tilde p^*$ (Fig. \ref{fig: plots f functions}(a)).  
        \begin{itemize}
            \item[(a1)] $j<i^*<k$: From $\tilde{a}>0$ it follows that $\tilde f(p_j)<0$ and $\tilde f(p_k)>0$. Fig. \ref{fig: plots f functions}(e) shows that $\big(\mathcal F(p_k)-\mathcal F(p_j)\big)<0$. Thus, $\Delta_k<0$ is required for achieving $\Delta E[U|a]>0$ (Eq. (\ref{eq: delta Udm})), which leads to $\Delta_j<0$ (Eq. \ref{eq:delta dj}). So, $\Delta E[U|a]>0$ can be achieved for any combination $(d_j,d_k)$ with $d_j>0$ and $d_k>0$. This is always possible unless either $d_j=0$ for all $j<i^*$, or $d_j=0$ for all $j>i^*$. 
            
            \item[(a2)] $j<k<i^*$: $\tilde f(p_j)<0$, $\tilde f(p_k)<0$ and $\big(\mathcal F(p_k)-\mathcal F(p_j)\big)>0$ (Fig. \ref{fig: plots f functions}(e) left of vertical asymptote). $\Delta E[U|a]>0$ requires $\Delta_k<0$ (Eq. (\ref{eq: delta Udm})) and $\Delta_j>0$ (Eq. (\ref{eq:delta dj})). So, $\Delta E[U|a]>0$ can be achieved for any combination $(d_j,d_k)$ (with $j<k<i^*$) for which $d_j<1$ and $d_k>0$. Such a combination always exists unless the decision vector $(d_1,d_2,...,d_{i^*-1})$ is an upper-bound threshold rule.

            \item[(a3)] $i^*<j<k$: $\tilde f(p_j)>0$, $\tilde f(p_k)>0$ and $\big(\mathcal F(p_k)-\mathcal F(p_j)\big)>0$ (Fig. \ref{fig: plots f functions}(e) right of vertical asymptote). $\Delta E[U|a]>0$ requires $\Delta_k>0$ (Eq. (\ref{eq: delta Udm})) and $\Delta_j<0$ (Eq. (\ref{eq:delta dj})). So, $\Delta E[U|a]>0$ can be achieved for any combination $(d_j,d_k)$ (with $i^*<j<k<$) for which $d_j>0$ and $d_k<1$. Such a combination always exists unless the decision vector $(d_{i+1},...,d_N)$ is a lower-bound threshold rule. 
        \end{itemize}
        From the first point we know that the utility maximizing decision rule $d(p)$ is either 0 for all $p<\tilde p^*$ or 0 for all $p>\tilde p^*$. If $d(p)=0$ for all $p>\tilde{p}^*$ the second point tels us that on the interval $[0,\tilde p^*)$ the utility maximizing decision rule is an upper-bound threshold rule, which means that on $[0,1]$ the utility maximizing decision rule is an upper-bound threshold rule with threshold $t\in[0,\tilde p^*)$. On the other hand, if $d(p)=0$ for all $p<\tilde p^*$ the third point tels us that on the interval $(\tilde p^*,1]$ the utility maximizing decision rule is a lower-bound threshold rule, which means that on $[0,1]$ the utility maximizing decision rule is a lower-bound threshold rule with threshold $t\in(\tilde p^*,1]$.   

        \item[(b)] $\tilde{a}<0$ and $p^*>\tilde p^*$ (Fig. \ref{fig: plots f functions}(b)). 
        \begin{itemize}
            \item[(b1)] $j<i^*<k$: From $\tilde \alpha<0$ it follows that $\tilde f(p_j)>0$ and $\tilde f(p_k)<0$. Fig. \ref{fig: plots f functions}(e) shows that $\big(\mathcal F(p_k)-\mathcal f(p_j)\big)<0$. Thus $\Delta _k>0$ is required for achieving $\Delta E[U|a]>0$ (Eq. (\ref{eq: delta Udm})), which leads to $\Delta_j>0$ (Eq. (\ref{eq:delta dj})). So, $\Delta E[U|a]>0$ can be achieved for any combination $(d_j,d_k)$ with $d_j<1$ and $d_k<1$. This is always possible unless either $d_j=1$ for all $j<i^*$, or $d_j=1$ for all $j>i^*$.

            \item[(b2)] $j<k<i^*$: $\tilde f(p_j)>0$, $\tilde f(p_k)>0$ and $\big(\mathcal F(p_k)-\mathcal F(p_j)\big)>0$ (Fig. \ref{fig: plots f functions}(e) left of vertical asymptote). $\Delta E[U|a]>0$ requires $\Delta_k>0$ (Eq. (\ref{eq: delta Udm})) and $\Delta_j<0$ (Eq. (\ref{eq:delta dj})). So, $\Delta E[U|a]>0$ can be achieved for any combination $(d_j,d_k)$ (with $j<k<i^*$) for which $d_j>0$ and $d_k<1$. Such a combination always exists unless the decision vector $(d_1,d_2,...,d_{i*-1})$ is a lower-bound threshold rule.

            \item [(b3)] $i^*<j<k$: $\tilde f(p_j)<0$, $\tilde f(p_k)<0$ and $\big(\mathcal F(p_k)-\mathcal F(p_j)\big)>0$ (Fig. \ref{fig: plots f functions}(e) right of vertical asymptote). $\Delta E[U|a]>0$ requires $\Delta_k<0$ (Eq. (\ref{eq: delta Udm})) and $\Delta_j>0$ (Eq. (\ref{eq:delta dj})). So, $\Delta E[U|a]>0$ can be achieved for any combination $(d_j,d_k)$ (with $j<k<i^*$) for which $d_j<1$ and $d_k>0$. Such a combination always exists unless the decision vector $(d_{i+1},...,d_N)$ is an upper-bound threshold rule.

        \end{itemize}
        Combining these pieces of information gives us that the utility maximizing threshold rule is either an upper-bound threshold rule with threshold $t\in(\tilde p^*,1]$, or a lower-bound threshold rule with threshold $t\in[0,\tilde p^*)$.

        \item[(c)] $\tilde{a}>0$ and $p^*<\tilde p^*$ (Fig. \ref{fig: plots f functions}(c)).
        \begin{itemize}
            \item[(c1)] $j<i^*<k$: From $\tilde \alpha>0$ it follows that $\tilde f(p_j)<0$ and $\tilde f(p_k)>0$. Fig. \ref{fig: plots f functions}(f) shows that $\big(\mathcal F(p_k)-\mathcal f(p_j)\big)>0$. Thus $\Delta _k>0$ is required for achieving $\Delta E[U|a]>0$ (Eq. (\ref{eq: delta Udm})), which leads to $\Delta_j>0$ (Eq. (\ref{eq:delta dj})). So, $\Delta E[U|a]>0$ can be achieved for any combination $(d_j,d_k)$ with $d_j<1$ and $d_k<1$. This is always possible unless either $d_j=1$ for all $j<i^*$, or $d_j=1$ for all $j>i^*$.
            
            \item[(c2)] $j<k<i^*$: $\tilde f(p_j)<0$, $\tilde f(p_k)<0$ and $\big(\mathcal F(p_k)-\mathcal F(p_j)\big)<0$ (Fig. \ref{fig: plots f functions}(f) left of vertical asymptote). $\Delta E[U|a]>0$ requires $\Delta_k>0$ (Eq. (\ref{eq: delta Udm})) and $\Delta_j<0$ (Eq. (\ref{eq:delta dj})). So, $\Delta E[U|a]>0$ can be achieved for any combination $(d_j,d_k)$ (with $j<k<i^*$) for which $d_j>0$ and $d_k<1$. Such a combination always exists unless the decision vector $(d_1,d_2,...,d_{i*-1})$ is a lower-bound threshold rule.

            \item [(c3)] $i^*<j<k$: $\tilde f(p_j)>0$, $\tilde f(p_k)>0$ and $\big(\mathcal F(p_k)-\mathcal F(p_j)\big)<0$ (Fig. \ref{fig: plots f functions}(f) right of vertical asymptote). $\Delta E[U|a]>0$ requires $\Delta_k<0$ (Eq. (\ref{eq: delta Udm})) and $\Delta_j>0$ (Eq. (\ref{eq:delta dj})). So, $\Delta E[U|a]>0$ can be achieved for any combination $(d_j,d_k)$ (with $j<k<i^*$) for which $d_j<1$ and $d_k>0$. Such a combination always exists unless the decision vector $(d_{i+1},...,d_N)$ is an upper-bound threshold rule.
            
        \end{itemize}
        Combining these pieces of information gives us that the utility maximizing threshold rule is either an upper-bound threshold rule with threshold $t\in(\tilde p^*,1]$, or a lower-bound threshold rule with threshold $t\in[0,\tilde p^*)$.

        \item[(d)] $\tilde{a}<0$ and $p^*>\tilde p^*$ (Fig. \ref{fig: plots f functions}(d)).
        \begin{itemize}
            \item[(d1)] $j<i^*<k$: From $\tilde{a}<0$ it follows that $\tilde f(p_j)>0$ and $\tilde f(p_k)<0$. Fig. \ref{fig: plots f functions}(f) shows that $\big(\mathcal F(p_k)-\mathcal F(p_j)\big)>0$. Thus, $\Delta_k<0$ is required for achieving $\Delta E[U|a]>0$ (Eq. (\ref{eq: delta Udm})), which leads to $\Delta_j<0$ (Eq. \ref{eq:delta dj}). So, $\Delta E[U|a]>0$ can be achieved for any combination $(d_j,d_k)$ with $d_j>0$ and $d_k>0$. This is always possible unless either $d_j=0$ for all $j<i^*$, or $d_j=0$ for all $j>i^*$.    
            
            \item[(d2)] $j<k<i^*$: $\tilde f(p_j)>0$, $\tilde f(p_k)>0$ and $\big(\mathcal F(p_k)-\mathcal F(p_j)\big)<0$ (Fig. \ref{fig: plots f functions}(f) left of vertical asymptote). $\Delta E[U|a]>0$ requires $\Delta_k<0$ (Eq. (\ref{eq: delta Udm})) and $\Delta_j>0$ (Eq. (\ref{eq:delta dj})). So, $\Delta E[U|a]>0$ can be achieved for any combination $(d_j,d_k)$ (with $j<k<i^*$) for which $d_j<1$ and $d_k>0$. Such a combination always exists unless the decision vector $(d_1,d_2,...,d_{i^*-1})$ is an upper-bound threshold rule.

            \item[(d3)] $i^*<j<k$: $\tilde f(p_j)<0$, $\tilde f(p_k)<0$ and $\big(\mathcal F(p_k)-\mathcal F(p_j)\big)<0$ (Fig. \ref{fig: plots f functions}(f) right of vertical asymptote). $\Delta E[U|a]>0$ requires $\Delta_k>0$ (Eq. (\ref{eq: delta Udm})) and $\Delta_j<0$ (Eq. (\ref{eq:delta dj})). So, $\Delta E[U|a]>0$ can be achieved for any combination $(d_j,d_k)$ (with $i^*<j<k<$) for which $d_j>0$ and $d_k<1$. Such a combination always exists unless the decision vector $(d_{i+1},...,d_N)$ is a lower-bound threshold rule.
        \end{itemize}
        Combining these pieces of information gives us that the utility maximizing threshold rule is either an upper-bound threshold rule with threshold $t\in[0,\tilde p^*)$, or a lower-bound threshold rule with threshold $t\in(\tilde p^*,1]$.
    \end{itemize}
    So in the case where $\tilde p^*\in[0,1]$ all possible scenario's lead to the outcome that the utility optimizing decision rule $d(p)$ is either given by a lower-bound threshold rule, or by an upper-bound threshold rule.

    From the results of case 1 and 2 we know that the decision rule that maximizes the Riemann approximation of $E[U]$ for a given DS utility is given by a lower-bound threshold rule in the case that the elements of $(v_{ij})$ are defined such that $\tilde f(p)$ does not intersect the $x$-axis on the interval $[0,1]$, and by either a lower- or upper-bound threshold rule when $\tilde f(p)$ does intersect the $x$-axis on the interval $[0,1]$.
    By taking the limit $N\to \infty$ we generalize this result to also hold for the exact calculation of $E[U]$.
    
\end{proof}

\section{Complementary figure to proof of theorem \ref{thm: threshold decision rule}}\label{app: plots f-functions}
Figures \ref{fig: plots f functions}(a)-(d) show the different shapes that $f(p)$ and $\tilde f(p)$ can take with respect to each other: $\tilde{f}(p)$ can have a positive or negative slope and $p^*$ can be larger or smaller than $\tilde p^*$. The horizontal line in the plots represents the $x-$axis, on which the intersection points $p^*=-\beta/\alpha$ and $\tilde p^*=-\tilde \beta/\tilde\alpha$ are plotted. From the constraints we have on $f(p)$ (that is $\alpha>0$, $\beta>0$ and $\alpha+\beta>0$ as pointed out in Section \ref{subsec: utility-based}), we know that $p^*$ lies in the interval $[0,1]$. As we do not have those constraints on $\tilde f(p)$, $\tilde p^*=-\tilde\beta/\tilde\alpha$ could lie both inside and outside of the interval $[0,1]$.

\begin{figure}[h!]
    \includegraphics[width=0.24\linewidth]{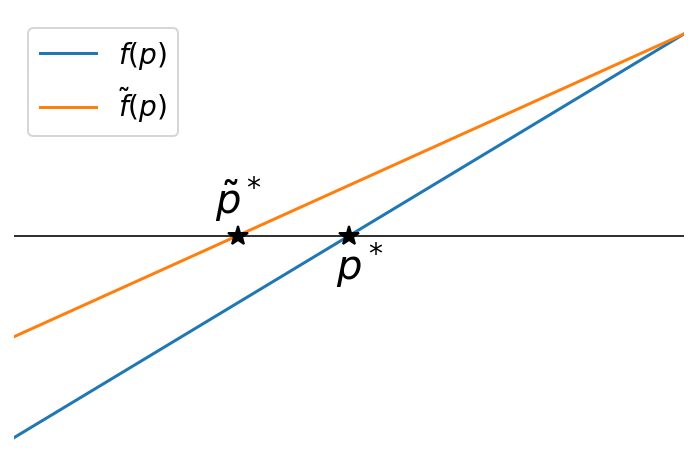}
    \includegraphics[width=0.24\linewidth]{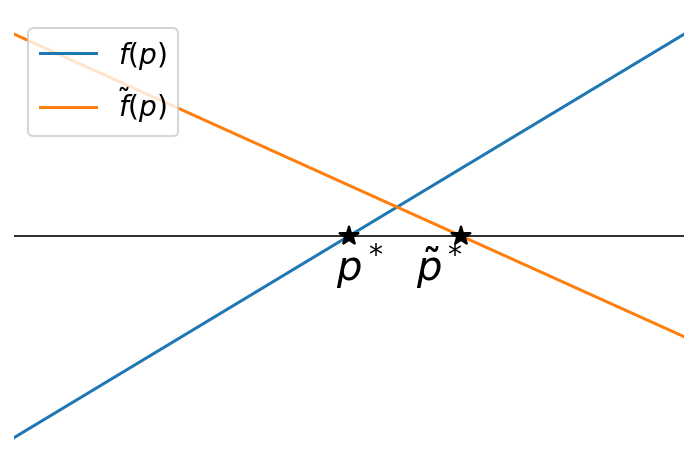}
    \includegraphics[width=0.24\linewidth]{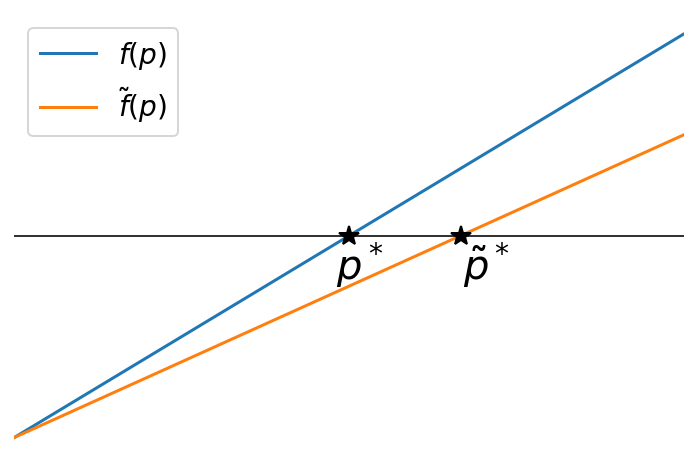}
    \includegraphics[width=0.24\linewidth]{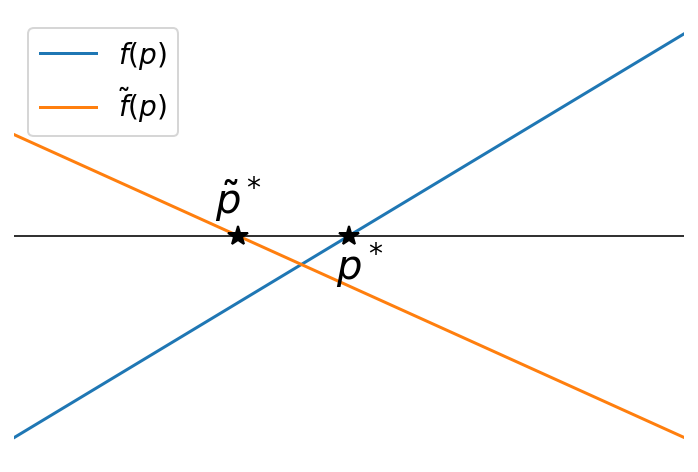}
    \\ 
    \small (a) $\tilde{\alpha}>0$ and $p^*>\tilde p^*$ \quad\quad\quad
    (b) $\tilde \alpha <0$ and $p^*<\tilde p^*$ \quad\quad\quad
    (c) $\tilde \alpha >0$ and $p^*<\tilde p^*$ \quad\quad\quad
    (d) $\tilde \alpha <0$ and $p^*>\tilde p^*$
    \\
    \includegraphics[width=0.35\linewidth]{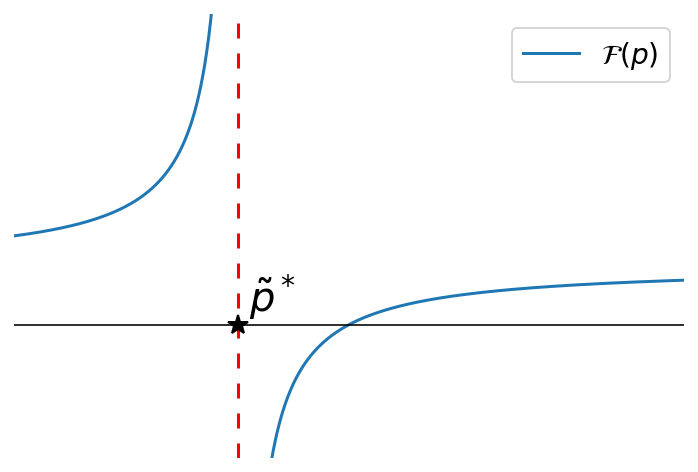}\quad\quad\quad\quad\quad\quad
    \includegraphics[width=0.35\linewidth]{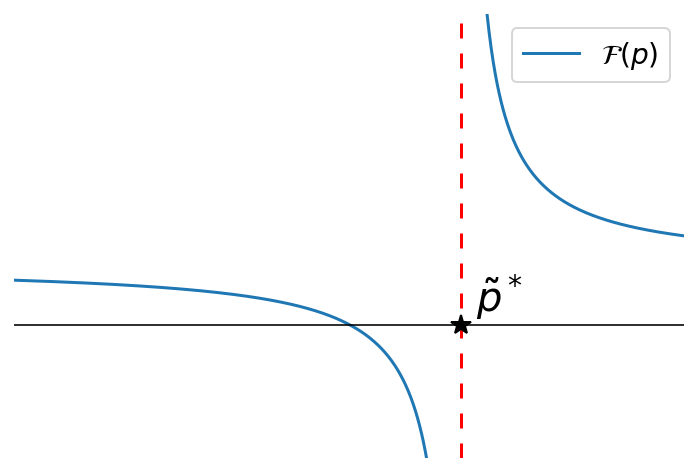}
    \\
    \small (e) $\mathcal F(p)$ for (a) and (b) \quad\quad\quad\quad\quad\quad\quad\quad\quad\quad\quad\quad
    (f) $\mathcal F(p)$ for (c) and (d) 

    \caption{(a)-(d) show the four possible shapes of $\tilde f(p)$ with respect to $f(p)$. (e) shows the function $\mathcal F(p)$ for the scenario's (a) en (b). (f) shows the function $\mathcal F(p)$ for the scenario's (c) and (d).}
    \label{fig: plots f functions}
\end{figure}

Figures \ref{fig: plots f functions}(e)-(f) show the two shapes that $\mathcal{F}(p)=f(p)/\tilde{f}(p)$ can take for different combinations of $f(p)$ and $\tilde f(p)$. The vertical asymptote (red dotted line) is given by $\tilde{p}^*=-\tilde{\beta}/\tilde{\alpha}$ and the horizontal asymptote is given by $\alpha/\tilde{\alpha}$.

\section{Addendum proof theorem \ref{thm: threshold decision rule} for $J\neq \emptyset$} \label{app: addendum proof 3.1}
Some popular fairness criteria are based on expected DS utility conditioned on a justifier $J$, where $J=Y$ (e.g. \textit{true positive rate}) or $J=D$ (e.g. \textit{positive predictive value}). In the following, we proof Theorem \ref{thm: threshold decision rule} for two of these situations. 

\begin{enumerate}
    \item[] \textbf{Conditioned on Y=1}
    For an individual belonging to group $a$ for whom is given that $Y=1$, the conditional expected DS utility is given by $E[V|\vec{x},Y=1]= d(\vec{x})v_{11}+(1-d(\vec{x}))v_{01}$. Furthermore, the distribution of individuals with $Y=1$ is given by $g(p)p/(\int g(p)p \: dp)$. Thus, the expected DS utility for a randomly selected individual with $Y=1$ is given by 
    \begin{equation}
        E[V|Y=1,a]=\frac{\int_0^1(d(p) \cdot p \cdot v_{11}+(1-d(p)) \cdot p \cdot v_{01})g(p) \:dp}{\int_0^1g(p) \:dp} = \frac{1}{BR_a}\int_0^1(d(p)(\tilde{\alpha}p+\tilde{\beta})+\tilde{\gamma} p+v_{00}) \: g(p) \: dp
    \end{equation}
    where $BR_a=P(Y=1|a)=\int_0^1g(p)p \:dp$, and $\tilde{\alpha}=v_{11}-v_{01}, \tilde\beta=0, \tilde\gamma=v_{01}$.
    
    Notice that this results in the same integral as for $J=\emptyset$, multiplied by a constant term ($1/BR_a$), and with slightly differently defined parameters $(\tilde{\alpha}, \tilde\beta, \tilde\gamma)$. Thus, the same arguments hold as for $J=\emptyset$ (see Appendix \ref{app: proof theorem threshold rule}), which proves Theorem \ref{thm: threshold decision rule} for $J=Y$.  
    
    \item[] \textbf{Conditioned on D=1} 
    For an individual belonging to group $a$ with $D=1$, the conditional expected DS utility is given by $E[V|\vec x, D=1,a]=p\cdot v_{11}+(1-p(\vec{x}))\cdot v_{10} =\tilde{\alpha}p(d)+\tilde{\beta}$, with $\tilde \alpha=v_{11}-v_{10}$ and $\tilde\beta=v_{10}$. Furthermore, the distribution of individuals with $D=1$ is given by $d(p) g(p)/(\int d(p) g(p) \: dp)$.    
    Following the reasoning of section \ref{subsec: utility-based}, the expected DS utility for a randomly selected individual with $D=1$ is given by 
    \begin{equation}\label{eq: int condition D=1}
    \begin{split}    E[V|D=1,a]&=\frac{\int_0^1(\tilde{\alpha}p+\tilde{\beta})d(p)g(p)dp}{\int_0^1d(p)g(p)dp}= \tilde{\alpha} \: \frac{\int_0^1pd(p)g(p)\: dp}{\int_0^1d(p)g(p) \: dp}+\tilde{\beta}\\
    \end{split}
    \end{equation}
    Again, we approximate this with the Riemann integral  
    \begin{equation}
        E[V|D=1,a]=\tilde{\alpha}\frac{\sum_{i=1}^Nd_ip_iw_i}{\sum_{i=1}^Nd_iw_i}+\tilde{\beta}
    \end{equation}
    with $w_i=\int_{\text{bin }i}g(p)dp$ the weight of each summand. 

    In the following, we derive the condition for which changes ($\Delta_j,\Delta_k)$ leave $E[V|D=1]$, or equivalently $\frac{\sum_{i=1}^Nd_ip_iw_i}{\sum_{i=1}^Nd_iw_i}$, unchanged:   
    \begin{equation}
        \tilde{\alpha}\frac{\Delta_j p_j w_j+\Delta_k p_k w_k+\sum_{i=1}^Nd_ip_iw_i}{\Delta_j w_j+\Delta_k w_k+\sum_{i=1}^Nd_iw_i} +\tilde{\beta}=\tilde{\alpha}\frac{\sum_{i=1}^Nd_ip_iw_i}{\sum_{i=1}^Nd_iw_i}+\tilde{\beta}
    \end{equation}
    With some algebra, it is easy to show that this condition is equivalent to 
    \begin{equation}\label{eq: Delta dj condition D=1}
        \Delta_j =\frac{w_k(\eta-p_k)}{w_j(p_j-\eta)}\Delta_k
    \end{equation}
    with $\eta=\frac{\sum_{i=1}^Nd_ip_iw_i}{\sum_{i=1}^Nd_iw_i}=PPV$.
        
    Assume $(\Delta_j,\Delta_k)$ fulfills this condition. Then, $E[U|a]$ changes by 
    \begin{equation}\label{eq: delta E(U) condition D=1}
    \begin{split}
    \Delta E[U|a] & = \Delta_j(\alpha p_j+\beta)w_j+\Delta_k(\alpha p_k+\beta)w_k \\
     & = \frac{w_k(\eta-p_k)}{w_j(p_j-\eta)}\Delta_k(\alpha p_j+\beta)w_j+\Delta_k(\alpha p_k+\beta)w_k\\
     & = \frac{(\eta\alpha+\beta)(p_j-p_k)}{p_j-\eta}w_k\Delta_k
    \end{split}
    \end{equation}

    Analogously to the proof of Theorem \ref{thm: threshold decision rule} for $J=\emptyset$ we assume an arbitrary decision rule $d\in \mathcal D_\eta$ with $E_d[V|D=1,a]=\eta$ and we will make 
    simultaneous changes $(\Delta_j,\Delta_k)$ to the decisions $(d_j,d_k)$ with $p_j<p_k$ that leave $E[V|D=1,a]$ unchanged, but improve $E[U|a]$.

    Note that $\eta\in[0,1]$, as $\eta=PPV$.  Let $p_j,p_k$ such that $p_j<\eta<p_k$. Then,  $\eta-p_k<0$ and $p_j-\eta<0$.  $E[U]>0$ is achieved for $\Delta_j,\Delta_k<0$ when $\eta<p^*$, and for $\Delta_j,\Delta_k>0$ when $\eta>p^*$ (Eq. \ref{eq: delta E(U) condition D=1}).

    \textbf{For $\eta<p^*$}, it is always possible to find a combination $(\Delta_j,\Delta_k)$ that lead to $E[U]>0$, unless $d==0$ for $d\in[0,\eta)$ or $d==0$ for $(\eta,1]$. In the first case ($d==0$ for $d\in[0,\eta)$), it is always possible to further increase $E[U]$ unless $d$ corresponds to a lower-bound threshold rule for  $ d\in(\eta,1]$ (see reasoning for the proof of $J=\emptyset$). In the second case ($d==0$ for $(\eta,1]$), $E[U]$ can be increased unless $d$ corresponds to an upper-bound threshold rule for $d\in[0,\eta)$. 

    \textbf{For $\eta>p^*$}, an analogous argumentation shows similarly that it is always possible to find a combination $(\Delta_j,\Delta_k)$ leading to $E(u)>0$, unless $d$ corresponds to an upper-bound or a lower-bound threshold rule. 
    
\end{enumerate}

\section{Population distribution - Section 4} \label{app: synthetic dataset sec 4}
In Section 4 we consider a population consisting of two groups 0 and 1. The population is equally distributed across the two groups, but both groups have a different distribution of the probability $P[Y=1]$. We assume to know the distributions of those probabilities for both groups: $g(p|a)$ follows the Beta distribution for both groups with parameters $\alpha_0=4.5,\beta_0=5.5$ for group 0 and $\alpha_1=5,\beta_1=3$ for group 1. Figure \ref{fig: pop distr sec 4} shows the distributions $g(p|a)$ of the groups.   

\begin{figure}[h!]
    \centering
    \includegraphics[width=0.4\linewidth]{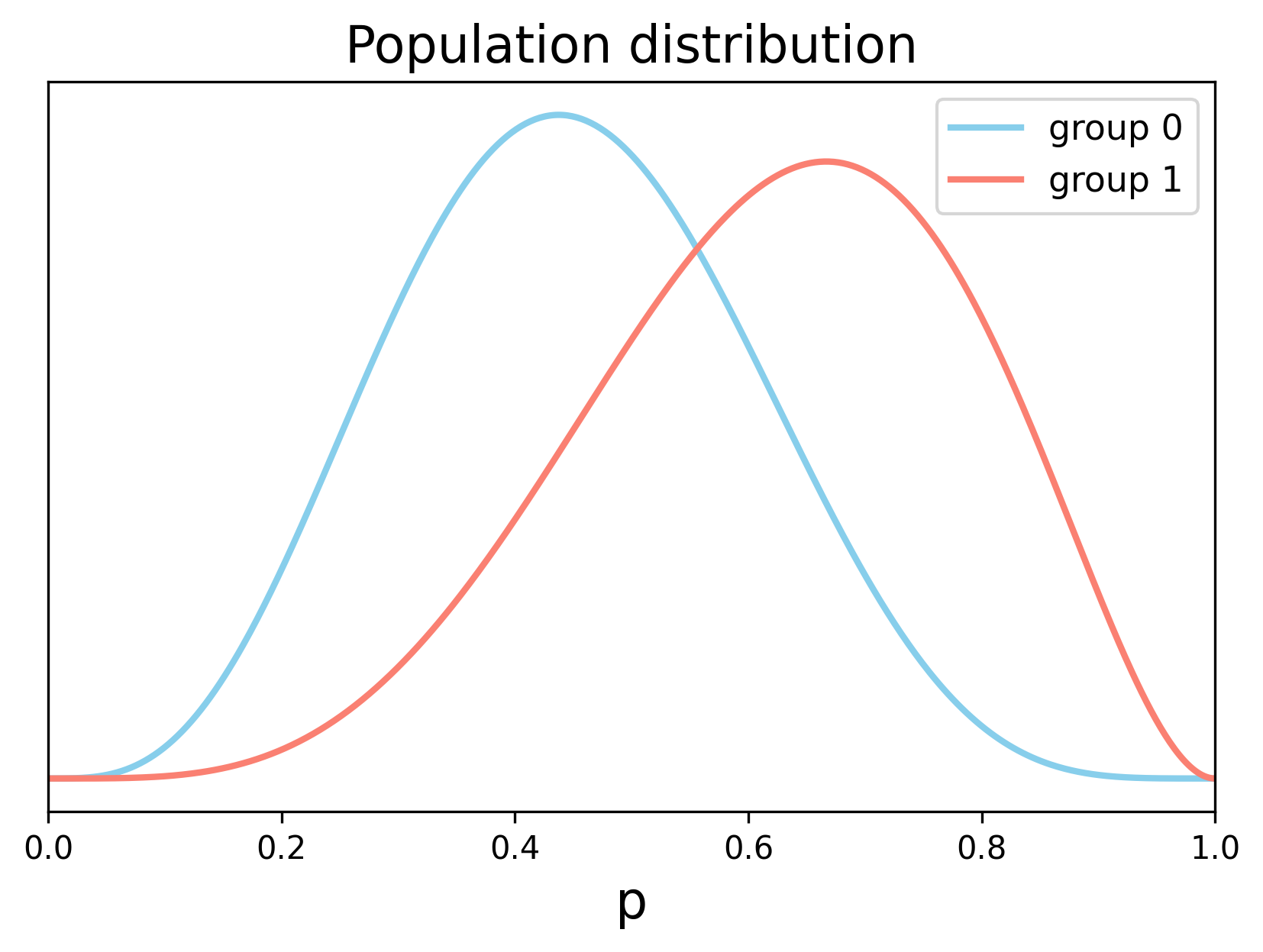}
    \caption{This plot shows the distributions $g(p|a)$ for the two groups $A=0$ and $A=1$.}
    \label{fig: pop distr sec 4}
\end{figure}

\end{document}